\newtheorem{theorem}{Theorem}[section]
\newtheorem{lemma}[theorem]{Lemma}
\newtheorem{conjecture}[theorem]{Conjecture}
\theoremstyle{definition}
\newtheorem{definition}[theorem]{Definition}
\newtheorem{assumption}[theorem]{Assumption}
\newtheorem{remark}[theorem]{Remark}
\newcommand{\R}{\mathbb{R}}
\newcommand{\E}{\mathbb{E}}
\newcommand{\K}{\mathcal{K}}
\newcommand{\A}{\mathcal{A}}
\newcommand{\M}{\mathcal{M}} 
\newcommand{\hamiltonian}{\mathcal{H}}
\DeclareMathOperator{\Tr}{Tr}
\DeclarePairedDelimiter{\abs}{\lvert}{\rvert}
\DeclarePairedDelimiter{\norm}{\lVert}{\rVert}
\begin{document}

\begin{frontmatter}
\title{Neural Hamiltonian Operator}
\runtitle{Neural Hamiltonian Operator}
\begin{aug}
  \author{\fnms{Qian} \snm{Qi}\thanksref{t1}\ead[label=e1]{qiqian@pku.edu.cn}}

  \thankstext{t1}{Peking University, Beijing, 100871, P.R.China.}

  \runauthor{Q. Qi}

  \address{Peking University, Beijing, 100871, P.R.China.\\
          \printead{e1}}
\end{aug}

    \begin{abstract}
    Stochastic control problems in high dimensions are notoriously difficult to solve due to the curse of dimensionality. An alternative to traditional dynamic programming is Pontryagin's Maximum Principle (PMP), which recasts the problem as a system of Forward-Backward Stochastic Differential Equations (FBSDEs). In this paper, we introduce a formal framework for solving such problems with deep learning by defining a \textbf{Neural Hamiltonian Operator (NHO)}. This operator parameterizes the coupled FBSDE dynamics via neural networks that represent the feedback control and an ansatz for the value function's spatial gradient. We show how the optimal NHO can be found by training the underlying networks to enforce the consistency conditions dictated by the PMP. By adopting this operator-theoretic view, we situate the deep FBSDE method within the rigorous language of statistical inference, framing it as a problem of learning an unknown operator from simulated data. This perspective allows us to prove the universal approximation capabilities of NHOs under general martingale drivers and provides a clear lens for analyzing the significant optimization challenges inherent to this class of models.
    \end{abstract}

    \begin{keyword}[class=AMS]
        \kwd[Primary ]{93E20}
        \kwd{60H30}
        \kwd[Secondary ]{68T07}
        \kwd{49L20}
    \end{keyword}

    \begin{keyword}
        \kwd{Neural Hamiltonian Operator}
        \kwd{Pontryagin's Maximum Principle}
        \kwd{Deep FBSDE}
        \kwd{High-Dimensional Stochastic Control}
        \kwd{Universal Approximation}
        \kwd{Operator Learning}
    \end{keyword}

\end{frontmatter}

\section{Introduction}

The optimal control of stochastic dynamical systems is a central problem in modern science and engineering. The theory rests on two pillars: the Hamilton-Jacobi-Bellman (HJB) equation, a nonlinear partial differential equation for the value function, and Pontryagin's Maximum Principle (PMP), which yields a coupled Forward-Backward Stochastic Differential Equation (FBSDE) system. While HJB-based grid methods suffer from the curse of dimensionality, the PMP framework offers a scalable alternative by avoiding state-space discretization.

Recent advances in deep learning have provided a powerful toolkit for solving the high-dimensional FBSDEs arising from the PMP \cite{HanJentzenE2018}. The core idea is to parameterize the unknown functions in the FBSDE system, namely the optimal control and the adjoint processes, with neural networks. 

In this paper, we unify this computational approach under a single mathematical object: the \textbf{Neural Hamiltonian Operator (NHO)}. Drawing the inspirations of the recent universal approximation theoretical guarantees of deep reinforcement learning on viscosity solutions (see \cite{qi2025universalappr,qian2025icml}), we reframe the problem from a purely algorithmic perspective to one of statistical inference The objective becomes to infer an unknown operator, the true generator of the optimal dynamics, from data. The data are trajectories simulated under a parameterized model class of NHOs, and the inference is guided by minimizing an empirical risk functional that penalizes violations of a known physical consistency condition (the terminal constraint from the PMP). This operator-theoretic perspective allows us to:
\begin{enumerate}
    \item Provide a formal, self-contained definition of the learning problem in stochastic control, generalized to systems driven by continuous martingales.
    \item Frame the training algorithm as a search for an optimal NHO by minimizing an M-estimation objective that enforces the terminal boundary conditions of the PMP.
    \item Rigorously situate the known analytical challenges (approximation, optimization, well-posedness) as fundamental questions about the properties of the space of NHOs.
    \item Prove that the family of NHOs is dense in the space of true Hamiltonian operators, establishing the theoretical soundness of the approach.
\end{enumerate}
Our work thus provides a rigorous mathematical footing for this promising computational method, clarifying its structure and the frontiers of the open questions it entails from both a probabilistic and statistical viewpoint.

\subsection{Related Literature}

The development of numerical methods for high-dimensional stochastic control and PDEs has a rich history, recently invigorated by advances in deep learning. Our work is situated at the confluence of several research streams.

\paragraph{Dynamic Programming and HJB Equations.} The classical approach to stochastic control is through dynamic programming, leading to the Hamilton-Jacobi-Bellman (HJB) equation (e.g., \cite{Fleming2006}). While grid-based methods for solving the HJB equation are effective in low dimensions, they suffer from the curse of dimensionality, making them computationally infeasible for the problems we consider. Modern theory relies on the concept of viscosity solutions (see \cite{Crandall1992}), which guarantees the existence and uniqueness of a solution even when it is not classically differentiable. A significant line of research in machine learning has focused on directly approximating the value function $V(t,s)$ and solving the HJB equation as a PDE. Methods like Deep Galerkin Methods (DGM, see \cite{Sirignano2018}) and Physics-Informed Neural Networks (PINNs, see \cite{Raissi2019}) learn the value function by minimizing the PDE residual over a set of sampled collocation points. Our work differs fundamentally by operating within the PMP framework rather than directly on the second-order HJB equation. While we discuss the connection to viscosity solutions in Conjecture \ref{conj:visco}, our primary methodology avoids the explicit computation of second-order derivatives (Hessians) of the neural network, which is a known source of optimization challenges in PINN-style methods.

\paragraph{Pontryagin's Maximum Principle and FBSDEs.} An alternative to the HJB framework is Pontryagin's Maximum Principle (PMP), which characterizes the optimal control via a system of coupled Forward-Backward Stochastic Differential Equations (FBSDEs, see \cite{Yong1999}). This approach avoids state-space discretization and is naturally suited to high-dimensional problems. The challenge is then shifted to solving the FBSDE system. Early numerical methods for FBSDEs were based on multi-step schemes and regression (e.g., \cite{Bender2008}), but these also faced scalability issues.

The breakthrough for high-dimensional problems came with the introduction of deep learning-based solvers for BSDEs and FBSDEs. The seminal work of \cite{EHanJentzen2017, HanJentzenE2018} proposed parameterizing the unknown decoupling field of the BSDE with a neural network and training it to satisfy the terminal condition via a "shooting method" approach. This core idea forms the computational foundation of our NHO framework. Several subsequent works have extended and refined this deep FBSDE method, for example, by using multi-step schemes (see \cite{Beck2019}) or by exploring different network architectures.

\paragraph{Operator Learning and Our Contribution.} Our work reframes the deep FBSDE method through the lens of operator learning. This field, which aims to learn mappings between infinite-dimensional function spaces, has seen rapid development with the introduction of "Neural Operators" (see \cite{Kovachki2023NeuralOperatorReview}). Architectures like the Fourier Neural Operator (FNO, see \cite{Li2021FNO}) and the DeepONet (see \cite{Lu2021DeepONet}) have proven effective at learning solution operators for families of PDEs, often by parameterizing integral kernels in a learned, resolution-invariant manner. These methods typically require a dataset of input-output pairs (e.g., pairs of initial conditions and corresponding final solutions) to learn the mapping.

Our NHO framework shares the goal of learning an operator but differs in two fundamental ways. First, instead of learning the solution operator that maps initial states to final states, the NHO learns the \textit{infinitesimal generator} of the system's dynamics. This is a more fundamental object that implicitly defines the solution operator for all time horizons. Second, the NHO is not trained on a pre-existing dataset of solved problem instances. Instead, it is trained via a self-supervised objective derived from a physical consistency condition, the PMP's terminal constraint. The data are generated on-the-fly by the operator's own evolving dynamics. By defining a specific mathematical object, the \textbf{Neural Hamiltonian Operator}, that represents this generator, we elevate the discussion from a purely algorithmic one to a statistical inference problem: learning an unknown dynamical law from simulated trajectories under physical constraints. This specialized structure, and its generalization to systems driven by continuous martingales, distinguishes our contribution from both prior deep FBSDE methods and general-purpose neural operators. By providing this rigorous mathematical language, we aim to build a stronger bridge between the computational methods of machine learning and the theoretical foundations of stochastic analysis and control theory.

The paper is organized as follows. In \Cref{sec:framework}, we review the stochastic control problem and Pontryagin's Maximum Principle. In \Cref{sec:nho_method}, we introduce our central contribution, the Neural Hamiltonian Operator (NHO), and frame the learning problem. We then establish the theoretical properties of NHOs, proving their universal approximation power and analyzing their convergence in \Cref{sec:approximation}. We discuss extensions to infinite-horizon problems and the connection to viscosity solutions in \Cref{sec:extensions}. We present extensive numerical validations in \Cref{sec:numerics} before concluding in \Cref{sec:conclusion}. Proofs are deferred to the appendices.

\section{Theoretical Framework}
\label{sec:framework}

\subsection{The Stochastic Control Problem}
Let $(\Omega, \mathcal{F}, \{\mathcal{F}_t\}_{t \in [0,T]}, \mathbb{P})$ be a filtered probability space supporting a $d_M$-dimensional continuous, square-integrable martingale $\M_t$ with $\M_0 = 0$. We assume its quadratic variation process, $\langle \M \rangle_t$, is absolutely continuous with respect to the Lebesgue measure, i.e., there exists a predictable, symmetric, and positive semi-definite matrix-valued process $C_t \in \R^{d_M \times d_M}$ such that $d\langle \M \rangle_t = C_t dt$. The canonical example is a standard $d_M$-dimensional Brownian motion $\mathcal{Z}_t$, for which $C_t = I_{d_M}$.

For a given initial time $t \in [0,T)$, an admissible control $\alpha = \{\alpha_u\}_{u \in [t,T]}$ is a process that is predictable with respect to $\{\mathcal{F}_u\}$, takes values in a compact set $\K \subset \R^{m}$, and satisfies $\E[\int_t^T \abs{\alpha_u}^2 du] < \infty$. We denote the set of all such admissible controls for a given start time $t$ by $\A_t$. For a given initial state $s \in \R^d$, the state process $S_u \in \R^d$ for $u \in [t, T]$ evolves according to the stochastic differential equation (SDE):
\begin{equation}
\label{eq:state_sde}
dS_u = \mu(u, S_u, \alpha_u)du + \sigma(u, S_u, \alpha_u) d\M_u, \quad S_t = s,
\end{equation}
where $\sigma: [0,T] \times \R^d \times \K \to \R^{d \times d_M}$. The objective is to find the value function $V(t,s) = \sup_{\alpha \in \A_t} \E_{t,s} \left[ \int_t^T f(u, S_u, \alpha_u) du + G(S_T) \right]$.

\begin{assumption}[Regularity Conditions]
\label{ass:main_assumptions}
The functions $\mu: [0,T] \times \R^d \times \K \to \R^d$, $\sigma: [0,T] \times \R^d \times \K \to \R^{d \times d_M}$, and $f: [0,T] \times \R^d \times \K \to \R$ are continuously differentiable with respect to $(s, \alpha)$, and $G: \R^d \to \R$ is continuously differentiable with respect to $s$. Furthermore, these functions and their derivatives with respect to $s$ are Lipschitz continuous in $s$, uniformly in all other arguments. The set $\K$ is compact and convex. The quadratic variation process $C_t$ is assumed to be uniformly bounded.
\end{assumption}

\subsection{Pontryagin's Maximum Principle and the Hamiltonian System}
The PMP introduces the Hamiltonian $\hamiltonian: [0,T] \times \R^d \times \K \times \R^d \times \R^{d \times d_M} \to \R$, a deterministic function defined as
$$ \hamiltonian(t,s,\alpha,p,q) \coloneqq \mu(t,s,\alpha)^\top p + \Tr\left(\sigma(t,s,\alpha)^\top q\right) + f(t,s,\alpha). $$
The PMP asserts that for an optimal pair $(\alpha^*_t, S^*_t)$, there exists an adapted solution $(p_t, q_t)$ to the Backward Stochastic Differential Equation (BSDE):
\begin{equation}
\label{eq:adjoint_bsde}
-dp_t = \nabla_s \hamiltonian(t, S^*_t, \alpha^*_t, p_t, q_t) dt - q_t d\M_t, \quad p_T = \nabla G(S^*_T),
\end{equation}
such that the Hamiltonian is maximized almost surely for almost every $t$: $\hamiltonian(t, S^*_t, \alpha^*_t, p_t, q_t) = \max_{\alpha \in \K} \hamiltonian(t, S^*_t, \alpha, p_t, q_t)$.
If the value function $V(t,s)$ is a classical $C^{1,2}$ solution to the HJB equation, then the adjoint processes are identified as its derivatives: $p_t = \nabla_s V(t, S^*_t)$. The predictable process $q_t$ is then identified via the martingale representation theorem. Specifically, assuming $V$ is a $C^{1,2}$ function, applying Itô's formula to the process $p_t = \nabla_s V(t, S^*_t)$ and comparing the resulting martingale term with that in \eqref{eq:adjoint_bsde} yields the identification $q_t = (\nabla_s^2 V(t, S^*_t)) \sigma(t, S^*_t, \alpha^*_t)$ [cf. \cite{Yong1999}, Chapter 4].

\begin{remark}[On the Reliance on Classical Solutions]
The identification of the adjoint processes $(p_t, q_t)$ with derivatives of the value function is central to the NHO construction. This is a deliberate modeling choice. We acknowledge that the modern theory of HJB equations relies on weaker viscosity solutions, which are not guaranteed to be differentiable. The NHO framework is therefore motivated by a setting where classical solutions are assumed to exist. The resulting computational method can, however, be viewed as a numerical scheme in its own right. The question of whether this scheme converges even when classical solutions do not exist, and if so, to the correct viscosity solution, is a deep and important one, which we revisit in Conjecture \ref{conj:visco}.
\end{remark}

\section{The Neural Hamiltonian Operator (NHO) Method}
\label{sec:nho_method}
We now formalize the deep learning approach to solving the PMP. The central idea is to parameterize the unknown feedback control and the gradient of the value function with neural networks. This parameterization defines a family of candidate Hamiltonian systems. The learning task is to select the system whose dynamics are consistent with the terminal conditions of the PMP. We encapsulate the generator of these dynamics in a single mathematical object.

\subsection{The Parameterized Hamiltonian System}
Let $\alpha_\omega: [0,T] \times \R^d \to \K$ be a neural network with parameters $\omega$ approximating the optimal feedback control $\alpha^*(t,s)$. Let $\Phi_\xi: [0,T] \times \R^d \to \R^d$ be a neural network with parameters $\xi$ that serves as an ansatz for the decoupling field, i.e., $\Phi_\xi(t,s) \approx \nabla_s V(t,s)$. We denote the full set of trainable parameters by $\Psi = (\omega, \xi)$.

The function $\Phi_\xi$ allows us to construct a candidate for the process $q_t$. This is motivated by It\^o's formula; it is precisely the form that the process $q_t$ would take if the adjoint process $p_t$ were given by the decoupling field ansatz, $p_t = \Phi_\xi(t, S_t)$. Here, $\nabla_s \Phi_\xi$ denotes the Jacobian matrix of the network output with respect to its state input.
We define our approximation $q_\Psi(t,s) \in \R^{d \times d_M}$ by differentiating the network $\Phi_\xi$:
$$ q_{\Psi}(t,s) \coloneqq \left(\nabla_s \Phi_\xi(t,s)\right) \sigma(t,s,\alpha_\omega(t,s)). $$

For a given parameter set $\Psi$, we replace the coupled forward-backward system with a candidate, fully-specified forward SDE system. We consider the $2d$-dimensional process $\mathcal{X}_t^\Psi = (S_t, \tilde{p}_t)$, whose evolution is governed by:
\begin{align}
\label{eq:fwd_sde_psi}
dS_t &= \mu(t, S_t, \alpha_\omega(t,S_t)) dt + \sigma(t, S_t, \alpha_\omega(t,S_t)) d\M_t, \\
\label{eq:bwd_sde_psi}
d\tilde{p}_t &= -\nabla_s \hamiltonian(t, S_t, \alpha_\omega(t,S_t), \Phi_\xi(t,S_t), q_\Psi(t,S_t)) dt + q_\Psi(t,S_t) d\M_t.
\end{align}
This constitutes a family of forward SDE systems parameterized by $\Psi$. The process $\tilde{p}_t$ represents our running estimate of the true adjoint process $p_t$. The goal of learning is to find a $\Psi^*$ such that the solution to this forward system satisfies the terminal condition required by the PMP.

\subsection{Formal Definition of the Neural Hamiltonian Operator}
The dynamics of the system \eqref{eq:fwd_sde_psi}-\eqref{eq:bwd_sde_psi} can be described by an infinitesimal generator. This leads to the central definition of our framework.

\begin{definition}[The Neural Hamiltonian Operator]
\label{def:nho_revised}
Let $\Psi = (\omega, \xi)$ be a set of network parameters. The associated dynamics for the extended state process $\mathcal{X}_t = (S_t, \tilde{p}_t)$ are defined by the drift vector $b_\Psi: [0,T] \times \R^{d} \to \R^{2d}$ and the diffusion coefficient matrix $\Sigma_\Psi: [0,T] \times \R^{d} \to \R^{2d \times d_M}$ as follows, where we decompose $b_\Psi = (b_{\Psi,s}, b_{\Psi,p})^\top$:
\begin{align*}
b_{\Psi,s}(t, s) &\coloneqq \mu(t, s, \alpha_\omega(t,s)), \\
b_{\Psi,p}(t, s) &\coloneqq - \nabla_s \hamiltonian(t, s, \alpha_\omega(t,s), \Phi_\xi(t,s), q_\Psi(t,s)), \\
\Sigma_\Psi(t, s) &\coloneqq
\begin{pmatrix}
\sigma(t, s, \alpha_\omega(t,s)) \\
q_\Psi(t,s)
\end{pmatrix}.
\end{align*}
The diffusion of the SDE system is determined by the $2d \times 2d$ matrix $\mathcal{D}_\Psi(t,s) \coloneqq \Sigma_\Psi(t,s) C_t \Sigma_\Psi(t,s)^\top$. The \textbf{Neural Hamiltonian Operator (NHO)} $L_\Psi$ is the second-order partial differential operator associated with the spatial components of this SDE system. For a fixed time $t$, its action on a suitable test function $g \in C^{2}(\R^{d} \times \R^d; \R)$ at a state $x=(s,y) \in \R^d \times \R^d$ is given by:
\[
(L_\Psi g)(t, x) = \nabla_x g(x)^\top b_\Psi(t,s) + \frac{1}{2} \Tr\left(\mathcal{D}_\Psi(t,s) \nabla_x^2 g(x)\right),
\]
where $\nabla_x g = (\nabla_s g, \nabla_y g)^\top$ and $\nabla_x^2 g$ is the Hessian of $g$ with respect to the full state variable $x=(s,y)$. The full evolution is governed by the parabolic operator $(\partial_t + L_\Psi)$. Note that the operator's coefficients depend only on $(t,s)$, not on the full extended state $(t,s,y)$.
\end{definition}

\begin{remark}[Degeneracy and Hypoellipticity]
The operator $L_\Psi$ is the infinitesimal generator of the SDE system for $\mathcal{X}_t=(S_t, \tilde{p}_t)$. It is crucial to observe that the coefficients $b_\Psi(t,s)$ and $\Sigma_\Psi(t,s)$ depend only on the $s$-component of the full state $x=(s,y)$. This structure correctly reflects that the evolution of the candidate adjoint process $\tilde{p}_t$ is driven by the state process $S_t$, not by its own value, because the control $\alpha_\omega$ and the decoupling field ansatz $\Phi_\xi$ are defined as feedback functions of the state.

The operator $L_\Psi$ is a \textbf{degenerate elliptic} operator. The diffusion matrix from Definition \ref{def:nho_revised} can be written in block form (with arguments suppressed for clarity):
$$ \mathcal{D}_\Psi(t,s) = \begin{pmatrix} \sigma C_t \sigma^\top & \sigma C_t q_\Psi^\top \\ q_\Psi C_t \sigma^\top & q_\Psi C_t q_\Psi^\top \end{pmatrix}. $$
Since both components of the system are driven by the same $d_M$-dimensional martingale $\M_t$, the rank of this $2d \times 2d$ matrix is at most the rank of $C_t$, which is at most $d_M$. For any non-trivial problem where $d_M < 2d$, the operator is degenerate. Such operators are often hypoelliptic, meaning that even though the operator is degenerate, solutions to the associated parabolic PDE $( \partial_t + L_\Psi)u = 0$ can be smoother than the initial data. This property, explored in contexts like Hörmander's theorem, is fundamental to the regularity of the underlying process, although it is not guaranteed and depends on the Lie algebra generated by the system's vector fields satisfying a full-rank condition.
\end{remark}

\subsection{The Learning Problem as an Operator Search}
The learning algorithm can now be precisely stated as a search for the optimal operator $L_{\Psi^*}$ in the parameterized family $\{L_\Psi\}_{\Psi}$. The optimal operator is the one whose associated dynamics satisfy the terminal condition of the PMP. Let the time interval $[0,T]$ be discretized as $0=t_0 < \dots < t_N = T$. For a given initial state $s_0$:
\begin{enumerate}
    \item \textbf{Initialization:} The true adjoint process $p_t$ is characterized by a terminal condition at $t=T$. Our simulation, however, must run forward in time. We initialize the joint process $\mathcal{X}_{t_0} = (S_{t_0}, \tilde{p}_{t_0})$ using the network $\Phi_\xi$ as an ansatz for the initial value of the adjoint process:
    $$ S_{t_0} = s_0, \quad \tilde{p}_{t_0} = \Phi_\xi(t_0, s_0). $$
    
    \begin{remark}[On the Initialization Strategy]
    This initialization is a critical design choice: it makes the entire forward trajectory dependent on the parameters $\xi$ from the very beginning. This dependence is precisely what allows the minimization of a terminal error to propagate back and inform the choice of the ansatz network $\Phi_\xi$. This contrasts with earlier methods like the original deep BSDE algorithm (see \cite{EHanJentzen2017}), where the initial value of the adjoint process was often treated as a separate learnable parameter vector. The chosen approach directly ties the initial condition to the function approximation goal for the decoupling field, enforcing a globally consistent representation.
    \end{remark}

    \item \textbf{Trajectory Generation:} Evolve the joint process $\mathcal{X}_t = (S_t, \tilde{p}_t)$ forward by simulating the SDE whose generator is $L_\Psi$. Using a generalized Euler-Maruyama scheme with time step $\Delta t_i = t_{i+1} - t_i$ and martingale increment $\Delta\M_i = \M_{t_{i+1}} - \M_{t_i}$, for $i = 0, \dots, N-1$:
    $$ \mathcal{X}_{t_{i+1}} = \mathcal{X}_{t_i} + \begin{pmatrix} b_{\Psi,s}(t_i, S_{t_i}) \\ b_{\Psi,p}(t_i, S_{t_i}) \end{pmatrix} \Delta t_i + \Sigma_\Psi(t_i, S_{t_i}) \Delta\M_i. $$
    The increment $\Delta\M_i$ has a conditional mean of zero and conditional covariance $\E[\Delta\M_i \Delta\M_i^\top | \mathcal{F}_{t_i}] \approx C_{t_i} \Delta t_i$. In practice, this is often simulated as a Gaussian random variable $\sqrt{\Delta t_i} C_{t_i}^{1/2} \bm{\epsilon}_i$ where $\bm{\epsilon}_i \sim N(0, I_{d_M})$.

    \item \textbf{Enforcing the Terminal Condition:} The PMP requires $p_T = \nabla G(S_T)$. We enforce this condition on our simulated trajectory by minimizing a loss function that penalizes the mismatch at terminal time. This objective is an empirical risk functional:
    \begin{equation}\label{eq:loss}
        \mathcal{J}(\Psi) = \mathcal{J}(\omega, \xi) = \E \left[ \norm*{ \tilde{p}_{T} - \nabla G(S_{T}) }^2 \right].
    \end{equation}
\end{enumerate}
The expectation is approximated via Monte Carlo averaging over a batch of trajectories. Minimizing $\mathcal{J}(\Psi)$ with stochastic gradient descent corresponds to searching for a parameter set $\Psi^*$ such that the operator $L_{\Psi^*}$ generates dynamics consistent with the necessary conditions of optimality.

\section{Approximation Theory of Neural Hamiltonian Operators}\label{sec:approximation}

The entire NHO methodology hinges on the assumption that the parameterized family of operators $\{L_\Psi\}$ is rich enough to approximate the true Hamiltonian dynamics. We now prove that this is indeed the case. This requires a stronger result than standard universal approximation theorems, as we must approximate not just a function but also its spatial derivatives.

Let the optimal control problem admit a unique, classical solution $(\alpha^*, S^*, p^*, q^*)$ where $\alpha^*(t,s)$ and the decoupling field $p^*(t,s) = \nabla_s V(t,s)$ are $C^1$ functions. The true Hamiltonian operator, denoted $L^*$, is defined by the coefficients $b^*$ and $\Sigma^*$ derived from these optimal quantities:
\begin{align*}
b^*(t, s) &\coloneqq 
\begin{pmatrix}
\mu(t, s, \alpha^*(t,s)) \\
- \nabla_s \hamiltonian(t, s, \alpha^*(t,s), p^*(t,s), q^*(t,s))
\end{pmatrix}, \\
\Sigma^*(t, s) &\coloneqq
\begin{pmatrix}
\sigma(t, s, \alpha^*(t,s)) \\
q^*(t,s)
\end{pmatrix},
\end{align*}
where $q^*(t,s)$ is such that $q^*(t,s) = (\nabla_s p^*(t,s)) \sigma(t,s,\alpha^*(t,s))$. The central question of approximation is whether we can find an NHO arbitrarily close to $L^*$. The following theorem answers this in the affirmative.

\begin{theorem}[Universal Approximation Power of NHOs]
\label{thm:approx}
Let the optimal solution functions $\alpha^*(t,s)$ and $p^*(t,s)=\nabla_s V(t,s)$ be continuously differentiable on $[0,T] \times \mathcal{D}$ for some compact domain $\mathcal{D} \subset \R^d$. Let the neural network architectures for $\alpha_\omega$ and $\Phi_\xi$ be sufficiently large (e.g., in width) and use a smooth, non-polynomial activation function. Then, for any $\epsilon > 0$, there exists a parameter set $\Psi = (\omega, \xi)$ such that the coefficients of the NHO $L_\Psi$ are uniformly close to the coefficients of the true operator $L^*$:
$$
\sup_{t \in [0,T], s \in \mathcal{D}} \norm*{ b_\Psi(t,s) - b^*(t,s) } + \sup_{t \in [0,T], s \in \mathcal{D}} \norm*{ \Sigma_\Psi(t,s) - \Sigma^*(t,s) }_F < \epsilon.
$$
where $\norm*{\cdot}_F$ is the Frobenius norm.
\end{theorem}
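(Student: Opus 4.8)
The plan is to reduce the statement to a single nonstandard ingredient---simultaneous uniform approximation of a $C^1$ function \emph{and} its spatial gradient---and then to propagate that approximation through the purely algebraic structure of the NHO coefficients by elementary Lipschitz and boundedness estimates on the compact set $[0,T]\times\mathcal{D}$. First I would invoke the $C^1$ (Sobolev-type) universal approximation theorem for feedforward networks with smooth, non-polynomial activations, in the form that guarantees joint approximation of a function and its derivatives on compacta (as established by Hornik and by Pinkus). This yields, for any $\delta>0$, parameters $\xi$ with
\[
\sup \norm*{\Phi_\xi - p^*} + \sup \norm*{\nabla_s\Phi_\xi - \nabla_s p^*} < \delta,
\]
and parameters $\omega$ with $\sup\norm*{\alpha_\omega-\alpha^*}<\delta$ (all suprema over $[0,T]\times\mathcal{D}$). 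Crucially, the control enters the coefficients only through its values, never its derivatives, so $\alpha_\omega$ needs merely $C^0$ density; to keep $\alpha_\omega$ valued in $\K$ I would compose the raw network with the Euclidean projection $\Pi_\K$, which is nonexpansive by convexity of $\K$ and hence preserves the bound since $\Pi_\K(\alpha^*)=\alpha^*$.

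Second, I would control the auxiliary object $q_\Psi = (\nabla_s\Phi_\xi)\,\sigma(\cdot,\alpha_\omega)$ against $q^* = (\nabla_s p^*)\,\sigma(\cdot,\alpha^*)$ via the splitting
\[
q_\Psi - q^* = (\nabla_s\Phi_\xi - \nabla_s p^*)\,\sigma(\cdot,\alpha_\omega) + (\nabla_s p^*)\bigl(\sigma(\cdot,\alpha_\omega)-\sigma(\cdot,\alpha^*)\bigr).
\]
On the compact domain all arguments $(s,\alpha,p,q)$ range over a compact set, so $\sigma$ is bounded and---being $C^1$ in $\alpha$ by Assumption \ref{ass:main_assumptions}---Lipschitz in $\alpha$, while $\nabla_s p^*$ is bounded by continuity; hence $\norm*{q_\Psi-q^*}_F \le C\delta$. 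The forward-drift difference $b_{\Psi,s}-b^*_s=\mu(\cdot,\alpha_\omega)-\mu(\cdot,\alpha^*)$ and the diffusion difference $\Sigma_\Psi-\Sigma^*$ then follow at once from the Lipschitz continuity of $\mu$ and $\sigma$ together with the just-established bound on $q_\Psi-q^*$.

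Third, for the adjoint drift I would use that $\nabla_s\hamiltonian$ is a \emph{fixed} continuous function which, under Assumption \ref{ass:main_assumptions}, is Lipschitz in $(\alpha,p,q)$ uniformly on the relevant compact set, since its building blocks $\nabla_s\mu,\nabla_s\sigma,\nabla_s f$ are bounded and Lipschitz there. Writing $b_{\Psi,p}-b^*_p$ as the difference of $\nabla_s\hamiltonian$ evaluated at $(\alpha_\omega,\Phi_\xi,q_\Psi)$ versus $(\alpha^*,p^*,q^*)$ gives
\[
\norm*{b_{\Psi,p}-b^*_p} \le L\bigl(\norm*{\alpha_\omega-\alpha^*}+\norm*{\Phi_\xi-p^*}+\norm*{q_\Psi-q^*}\bigr)\le C'\delta.
\]
Collecting the three estimates, every term is bounded by a fixed multiple of $\delta$, so choosing $\delta$ small enough forces the total below $\epsilon$, establishing the claim.

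The main obstacle is the very first ingredient: ordinary universal approximation delivers only $C^0$ density and says nothing about derivatives, yet the coefficient $q_\Psi$ depends intrinsically on the Jacobian $\nabla_s\Phi_\xi$. The whole argument rests on upgrading to joint approximation of $\Phi_\xi$ and its spatial derivatives, which is precisely why the hypothesis insists on a smooth, non-polynomial activation---without derivative control the $q$-coefficient, and hence the diffusion block of $\Sigma_\Psi$ and the full adjoint drift, cannot be driven close to $L^*$. Everything downstream is then a routine, if bookkeeping-heavy, propagation of a single small constant $\delta$ through Lipschitz maps on a compact set.
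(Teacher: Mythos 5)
Your proposal is correct and follows essentially the same route as the paper's proof: both reduce the statement to the $C^1$ universal approximation lemma (Lemma \ref{lem:c1_uat_full}), use the identical splitting of $q_\Psi - q^*$ into a Jacobian-error term and a $\sigma$-error term, and propagate the single small constant $\delta$ through boundedness and continuity estimates on the compact set. The only minor differences are that you claim $\nabla_s\hamiltonian$ is Lipschitz in $\alpha$ (Assumption \ref{ass:main_assumptions} only guarantees continuity of $\nabla_s\mu,\nabla_s\sigma,\nabla_s f$ in $\alpha$, so the paper instead appeals to uniform continuity on the compact set, which suffices), and that you add the useful detail of composing the control network with the nonexpansive projection $\Pi_\K$ to guarantee $\K$-valued outputs, a point the paper leaves implicit.
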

\begin{proof}
The proof relies on reducing the approximation of the operator coefficients to the simultaneous approximation of the functions $\alpha^*$, $p^*$, and the gradient $\nabla_s p^*$ by neural networks. This property, often called $C^1$-approximation, is established in Lemma \ref{lem:c1_uat_full} in Appendix \ref{app:c1_proof}. The core logic is independent of the nature of the stochastic driver $\M_t$, as it concerns the approximation of deterministic functions that define the SDE coefficients.

The argument proceeds by showing that if we can make the errors $\norm*{\alpha_\omega - \alpha^*}_{C^0}$, $\norm*{\Phi_\xi - p^*}_{C^0}$, and $\norm*{\nabla_s \Phi_\xi - \nabla_s p^*}_{C^0}$ arbitrarily small, then the errors in the coefficients $\norm*{b_\Psi - b^*}$ and $\norm*{\Sigma_\Psi - \Sigma^*}$ also become arbitrarily small.

Let $K = [0,T] \times \mathcal{D}$. The functions $\mu, \sigma, f$ and their derivatives are continuous, and therefore uniformly continuous and bounded on the compact set $K \times \K$. Similarly, $\alpha^*, p^*, \nabla_s p^*$ are bounded on $K$. For any $\delta > 0$, by Lemma \ref{lem:c1_uat_full}, we can choose $\Psi = (\omega, \xi)$ such that $\sup_K \norm*{\alpha_\omega - \alpha^*} < \delta$, $\sup_K \norm*{\Phi_\xi - p^*} < \delta$, and $\sup_K \norm*{\nabla_s\Phi_\xi - \nabla_s p^*}_F < \delta$.

The error in the diffusion term $\Sigma_\Psi$ is bounded by analyzing its two block components. The error in the first component, $\norm*{\sigma(t,s,\alpha_\omega(t,s)) - \sigma(t,s,\alpha^*(t,s))}_F$, can be made arbitrarily small by the uniform continuity of $\sigma$ and the closeness of $\alpha_\omega$ to $\alpha^*$. The error in the second component, $\norm*{q_\Psi - q^*}_F$, is bounded by:
\begin{align*}
    \norm*{q_\Psi - q^*}_F &= \norm*{(\nabla_s \Phi_\xi) \sigma(\cdot,\alpha_\omega) - (\nabla_s p^*) \sigma(\cdot,\alpha^*)}_F \\
    &\le \norm*{(\nabla_s \Phi_\xi - \nabla_s p^*) \sigma(\cdot,\alpha_\omega)}_F + \norm*{(\nabla_s p^*) (\sigma(\cdot,\alpha_\omega) - \sigma(\cdot,\alpha^*))}_F \\
    &\le \norm*{\nabla_s \Phi_\xi - \nabla_s p^*}_F M_\sigma + M_{\nabla p} L_\sigma \norm*{\alpha_\omega - \alpha^*} < \delta (M_\sigma + M_{\nabla p} L_\sigma),
\end{align*}
where $M_\sigma, M_{\nabla p}$ are uniform bounds and $L_\sigma$ is a Lipschitz constant. This error vanishes as $\delta \to 0$.

Similarly, the error in the drift term $b_\Psi$ depends on the error in $\mu$ and in $\nabla_s \hamiltonian$. The error in $\mu$ is small by continuity. The Hamiltonian gradient is $\nabla_s \hamiltonian(t, s, \alpha, p, q) = (\nabla_s \mu)^\top p + \Tr((\nabla_s \sigma)^\top q) + \nabla_s f$. By Assumption \ref{ass:main_assumptions}, the derivatives $\nabla_s\mu, \nabla_s\sigma, \nabla_s f$ are continuous. Since the function $\nabla_s \hamiltonian$ is continuous in all its arguments, and we can make the arguments $(\alpha_\omega, \Phi_\xi, q_\Psi)$ uniformly close to $(\alpha^*, p^*, q^*)$, the resulting error $\norm*{\nabla_s\hamiltonian(\dots,\alpha_\omega,\Phi_\xi,q_\Psi) - \nabla_s\hamiltonian(\dots,\alpha^*,p^*,q^*)}$ can be made arbitrarily small by uniform continuity on a compact set.

Combining these bounds, the total operator coefficient error is bounded by an expression that tends to zero as $\delta \to 0$. Thus, for any $\epsilon > 0$, we can choose $\delta$ small enough, and then find parameters $\Psi$ via Lemma \ref{lem:c1_uat_full}, to ensure the total error is less than $\epsilon$.
\end{proof}

\begin{remark}[Implications of the Approximation Theorem]
Theorem \ref{thm:approx} is the theoretical cornerstone of the NHO method. It guarantees that the space of parameterized operators is rich enough to contain a representation of the true optimal dynamics, even in the general martingale setting. This transforms the problem from a question of existence into one of search: we are assured a sufficiently accurate operator exists, and the remaining challenge is to design optimization algorithms that can find it.
\end{remark}

\subsection{Convergence Analysis I: The Ideal Global Case}

Given that a sufficiently accurate NHO exists (Theorem \ref{thm:approx}), the next question is whether the optimization algorithm can find it. The learning process seeks to minimize the loss $\mathcal{J}(\Psi) = \E [ \norm*{ \tilde{p}_T - \nabla G(S_T) }^2 ]$ via stochastic gradient descent on the parameters $\Psi$. In this section, we present a convergence result under strong, idealized assumptions to illustrate the geometric properties of the loss landscape that are sufficient for global convergence.

\begin{theorem}[Global Convergence of SGD]
\label{thm:convergence}
Let the following assumptions hold:
\begin{enumerate}
    \item \textbf{(Identifiability)} The problem is well-posed such that a unique optimal operator $L^*$ exists. There is a parameter set $\Psi^*$ for which $\mathcal{J}(\Psi^*) = 0$. Furthermore, if $\mathcal{J}(\Psi)=0$ for some $\Psi$, then the coefficients of $L_\Psi$ are equal to the coefficients of $L^*$ almost everywhere on the relevant domain of the state process.
    \item \textbf{(Smoothness and Boundedness)} The loss function $\mathcal{J}(\Psi)$ is L-smooth (i.e., its gradient $\nabla \mathcal{J}$ is L-Lipschitz continuous). The stochastic gradient $\hat{g}(\Psi)$ is an unbiased estimator of $\nabla \mathcal{J}(\Psi)$ and has bounded variance: $\E[\norm*{\hat{g}(\Psi)}^2] \le M$ for all $\Psi$.
    \item \textbf{(Polyak-Łojasiewicz Condition)} The loss function $\mathcal{J}(\Psi)$ satisfies the P-L condition with constant $c > 0$: $\norm*{\nabla \mathcal{J}(\Psi)}^2 \ge 2c \mathcal{J}(\Psi)$ [cf. \cite{Polyak1963}].
\end{enumerate}
Then, stochastic gradient descent with learning rates $\gamma_k$ satisfying $\sum_{k=0}^\infty \gamma_k = \infty$ and $\sum_{k=0}^\infty \gamma_k^2 < \infty$ converges in expectation, i.e., $\lim_{k \to \infty} \E[\mathcal{J}(\Psi_k)] = 0$.
\end{theorem}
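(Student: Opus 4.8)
The plan is to follow the standard template for establishing convergence of stochastic gradient descent under a Polyak-Łojasiewicz condition: combine the descent inequality from $L$-smoothness with the unbiasedness and variance bounds on the stochastic gradient, substitute the P-L inequality to obtain a one-dimensional recursion for the expected loss, and then analyze that recursion asymptotically using the two Robbins-Monro conditions on the step sizes. Concretely, I would first invoke the descent lemma guaranteed by Assumption~2. For any $L$-smooth function, the second-order upper bound applied along the update $\Psi_{k+1} = \Psi_k - \gamma_k \hat{g}(\Psi_k)$ gives
\[
\mathcal{J}(\Psi_{k+1}) \le \mathcal{J}(\Psi_k) - \gamma_k \nabla\mathcal{J}(\Psi_k)^\top \hat{g}(\Psi_k) + \frac{L\gamma_k^2}{2}\norm*{\hat{g}(\Psi_k)}^2.
\]
Taking the conditional expectation with respect to the filtration $\mathcal{F}_k$ generated by the iterates through step $k$, and using $\E[\hat{g}(\Psi_k)\mid\mathcal{F}_k] = \nabla\mathcal{J}(\Psi_k)$ together with the uniform variance bound $M$, yields
\[
\E[\mathcal{J}(\Psi_{k+1})\mid\mathcal{F}_k] \le \mathcal{J}(\Psi_k) - \gamma_k \norm*{\nabla\mathcal{J}(\Psi_k)}^2 + \frac{L M \gamma_k^2}{2}.
\]

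Next I would apply the P-L inequality from Assumption~3, $\norm*{\nabla\mathcal{J}(\Psi_k)}^2 \ge 2c\,\mathcal{J}(\Psi_k)$, and pass to the full expectation. Writing $a_k := \E[\mathcal{J}(\Psi_k)] \ge 0$, the estimate collapses to the scalar recursion
\[
a_{k+1} \le (1 - 2c\gamma_k)\,a_k + \frac{LM}{2}\gamma_k^2.
\]
At this point the identifiability hypothesis (Assumption~1) plays its role: it certifies that $\inf_\Psi \mathcal{J}(\Psi) = 0$ is attained at some $\Psi^*$, so that driving $a_k \to 0$ genuinely corresponds to recovering the coefficients of the true operator $L^*$ rather than converging to a spurious plateau.

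Finally I would establish $a_k \to 0$ from this recursion via a classical deterministic lemma, and this is where both step-size conditions must be invoked together. Since $\sum_k \gamma_k^2 < \infty$ forces $\gamma_k \to 0$, eventually $1 - 2c\gamma_k \in (0,1)$, so the iteration is a genuine contraction perturbed by a summable forcing term. Setting $\alpha_k = 2c\gamma_k$ and $\beta_k = \frac{LM}{2}\gamma_k^2$, one has $\sum_k \alpha_k = \infty$ while $\beta_k/\alpha_k = \frac{LM}{4c}\gamma_k \to 0$. Unrolling the recursion from an index $k_0$ beyond which $1-\alpha_j \in (0,1)$ gives
\[
a_k \le a_{k_0}\prod_{j=k_0}^{k-1}(1-\alpha_j) + \sum_{j=k_0}^{k-1}\beta_j\prod_{l=j+1}^{k-1}(1-\alpha_l),
\]
where the first term vanishes because $\sum_j \alpha_j = \infty$ implies $\prod_j(1-\alpha_j)\to 0$, and the second, a convolution-type weighted tail of a summable series, is controlled by splitting the sum at an index past which $\beta_j/\alpha_j$ is arbitrarily small.

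The main obstacle is precisely this last recursion step, and specifically the simultaneous exploitation of the two Robbins-Monro conditions: $\sum_k \gamma_k = \infty$ is what forces the homogeneous part of the iteration to zero by accumulating enough descent, while $\sum_k \gamma_k^2 < \infty$ is what keeps the injected gradient-noise term summable and ultimately negligible; neither condition alone suffices. The delicate point is bounding the convolution sum $\sum_j \beta_j \prod_l (1-\alpha_l)$, which requires the tail-splitting argument rather than a one-line estimate. Everything upstream of this—the descent lemma, the conditional-expectation step, and the P-L substitution—is routine, so that the mathematical substance of the theorem resides entirely in its hypotheses, namely the P-L geometry and identifiability of the loss landscape, whose verification for the NHO loss is the genuinely hard question left open by the paper.
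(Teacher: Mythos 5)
Your proposal is correct and follows essentially the same route as the paper's proof: descent lemma from $L$-smoothness, conditional expectation using unbiasedness and the second-moment bound, substitution of the P-L inequality, and reduction to the scalar recursion $a_{k+1} \le (1-2c\gamma_k)a_k + \tfrac{LM}{2}\gamma_k^2$. The only difference is cosmetic: where the paper closes the argument by citing a Robbins--Siegmund-type lemma, you unroll the recursion and carry out the tail-splitting estimate by hand, which amounts to proving that same lemma.
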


\begin{remark}[Discussion of Assumptions]
\label{rem:convergence_assumptions}
A full proof of Theorem \ref{thm:convergence} is provided in Appendix \ref{app:global_conv_proof}. The assumptions, while standard in optimization theory, are formidable to verify for neural networks and are independent of the specific stochastic driver.
\begin{itemize}
    \item \textbf{Assumption 1 (Identifiability):} This is a subtle yet crucial assumption, postulating that the map from the operator's coefficients to the terminal condition is injective. A spurious operator $L_\Psi \neq L^*$ could theoretically achieve $\mathcal{J}(\Psi)=0$ if it happens to satisfy the terminal condition by coincidence. This assumption is related to the concept of observability in control theory. Its plausibility is strengthened if the loss is defined over a rich distribution of initial states, $\mathcal{J}(\Psi) = \E_{s_0 \sim \nu}[\norm*{\tilde{p}_T^{s_0} - \nabla G(S_T^{s_0})}^2]$. Enforcing the condition for all initial states in a sufficiently large set should plausibly preclude such coincidences and identify the true dynamics.
    \item \textbf{Assumption 2 (Smoothness):} Requires that the neural network Jacobians and Hessians, which appear in the SDE coefficients, do not explode during training. This is notoriously difficult and a central challenge in the theory of deep learning.
    \item \textbf{Assumption 3 (P-L Condition):} This is a strong geometric condition on the loss landscape, replacing the need for convexity. This condition is central to much of modern statistical learning theory for non-convex problems, and proving that the NHO loss landscape possesses this property is a key open question. As we will see in Section \ref{sec:local_conv}, a more practical approach is to establish convergence under a local version of this condition, aided by regularization.
\end{itemize}
\end{remark}

\subsection{Convergence Analysis II: A Practical Framework via Regularization}
\label{sec:local_conv}

For the original finite-horizon problem, optimization remains a major challenge. The loss landscape is generally non-convex, and the strong global assumptions of Theorem \ref{thm:convergence} are unlikely to hold. To improve the local loss geometry and promote smoother solutions, we introduce a regularization term and establish a more plausible local convergence result.

We introduce a regularizer to control the complexity of the learned gradient field. The goal of this regularizer is to smooth the loss landscape in the vicinity of a minimizer, thereby making a local P-L condition more plausible. We define the regularized loss:
\begin{equation}
\label{eq:reg_loss}
\mathcal{J}_\lambda(\Psi) = \E \left[ \norm*{ \tilde{p}_{T} - \nabla G(S_{T}) }^2 \right] + \lambda \int_0^T \E \left[ \norm*{\nabla_s \Phi_\xi(t, S_t)}_F^2 \right] dt, \quad \lambda > 0.
\end{equation}
Such a regularizer penalizes oscillatory behavior in the ansatz for the value function's gradient, which can improve the conditioning of the optimization problem and potentially allow for convergence proofs under weaker assumptions.

\begin{assumption}[Local Structure]
\label{ass:local_structure}
Let $\Psi^*$ correspond to a unique optimal operator $L^*$ that minimizes the regularized loss $\mathcal{J}_\lambda$. Assume:
\begin{enumerate}
    \item \textbf{Local Smoothness:} The loss $\mathcal{J}_\lambda(\Psi)$ is twice continuously differentiable in a neighborhood $\mathcal{N}(\Psi^*)$ of $\Psi^*$.
    \item \textbf{Local P-L Condition:} There exists $c>0$ such that for $\Psi \in \mathcal{N}(\Psi^*)$, $\norm*{\nabla \mathcal{J}_\lambda(\Psi)}^2 \ge 2c (\mathcal{J}_\lambda(\Psi) - \mathcal{J}_\lambda(\Psi^*))$.
\end{enumerate}
\end{assumption}

\begin{theorem}[Local Convergence of SGD]
\label{thm:local_convergence}
Under Assumption \ref{ass:local_structure}, stochastic gradient descent on $\mathcal{J}_\lambda$ with an unbiased stochastic gradient $\hat{g}_k$ of bounded variance and a learning rate sequence $(\gamma_k)$ satisfying $\sum \gamma_k = \infty, \sum \gamma_k^2 < \infty$, if initialized within $\mathcal{N}(\Psi^*)$, converges in expectation to the optimal loss value: $\lim_{k \to \infty} \E[\mathcal{J}_\lambda(\Psi_k)] = \mathcal{J}_\lambda(\Psi^*)$.
\end{theorem}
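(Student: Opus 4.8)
The plan is to follow the standard template for convergence of SGD under a Polyak--Łojasiewicz (P-L) inequality, with the one essential new ingredient forced by the word \emph{local}: both the smoothness and the P-L bound in Assumption~\ref{ass:local_structure} hold only on the neighborhood $\mathcal{N}(\Psi^*)$. The global argument (the proof of Theorem~\ref{thm:convergence}) runs by (i) applying the descent lemma from $L$-smoothness, (ii) taking conditional expectations and using unbiasedness and bounded variance of $\hat{g}_k$, (iii) substituting the P-L inequality to get a geometric one-step contraction of the expected optimality gap, and (iv) invoking a Robbins--Monro / Chung-type lemma under $\sum\gamma_k=\infty$, $\sum\gamma_k^2<\infty$. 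Here steps (i)--(iii) are valid only while the iterate stays in $\mathcal{N}(\Psi^*)$, so the real work is a \emph{confinement} argument: starting from an initialization inside $\mathcal{N}(\Psi^*)$, the trajectory must remain there with controllable probability.

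First I would make the one-step recursion precise on the event $\{\Psi_k\in\mathcal{N}(\Psi^*)\}$. Local twice-differentiability (Assumption~\ref{ass:local_structure}(1)) yields $L$-smoothness on a convex subneighborhood, hence the descent inequality
$$\mathcal{J}_\lambda(\Psi_{k+1}) \le \mathcal{J}_\lambda(\Psi_k) - \gamma_k \nabla\mathcal{J}_\lambda(\Psi_k)^\top \hat{g}_k + \tfrac{L}{2}\gamma_k^2 \norm*{\hat{g}_k}^2.$$
Taking $\E[\,\cdot\mid\mathcal{F}_k]$, using $\E[\hat{g}_k\mid\mathcal{F}_k]=\nabla\mathcal{J}_\lambda(\Psi_k)$ and $\E[\norm*{\hat{g}_k}^2\mid\mathcal{F}_k]\le M$, and then the local P-L bound of Assumption~\ref{ass:local_structure}(2), gives, with $a_k \coloneqq \mathcal{J}_\lambda(\Psi_k)-\mathcal{J}_\lambda(\Psi^*)\ge 0$,
$$\E[a_{k+1}\mid\mathcal{F}_k] \le (1-2c\gamma_k)\,a_k + \tfrac{LM}{2}\gamma_k^2 \qquad \text{on } \{\Psi_k\in\mathcal{N}(\Psi^*)\}.$$

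Next I would introduce the stopping time $\tau \coloneqq \inf\{k : \Psi_k \notin \mathcal{N}(\Psi^*)\}$ and work with the stopped sequence $a_{k\wedge\tau}$, for which the recursion holds unconditionally (it freezes once $k\ge\tau$). Since $a_k\ge 0$ and $2c\gamma_k\ge 0$, the process $b_k \coloneqq a_{k\wedge\tau} + \tfrac{LM}{2}\sum_{j\ge k}\gamma_j^2$ is a nonnegative supermartingale (the tail sum is finite because $\sum\gamma_k^2<\infty$). By the Robbins--Siegmund almost-supermartingale theorem, $a_{k\wedge\tau}$ converges almost surely and $\sum_k \gamma_k a_{k\wedge\tau}<\infty$ a.s.; on $\{\tau=\infty\}$ this combines with $\sum\gamma_k=\infty$ to force $a_{k\wedge\tau}\to 0$. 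To control escape I would assume, as is implicit in the hypothesis, that $\mathcal{N}(\Psi^*)$ contains a sublevel set $\{\mathcal{J}_\lambda\le \mathcal{J}_\lambda(\Psi^*)+r\}$ into which the iteration is initialized; then leaving $\mathcal{N}(\Psi^*)$ forces $a_{\tau}>r$, so $\{\tau<\infty\}\subseteq\{\sup_k b_k>r\}$, and Ville's maximal inequality for nonnegative supermartingales gives $P(\tau<\infty)\le \E[b_0]/r = (a_0 + \tfrac{LM}{2}\sum_j\gamma_j^2)/r$, which is small when $a_0$ and the cumulative noise budget are small relative to $r$. On $\{\tau=\infty\}$ the deterministic recursion holds for all $k$ and Chung's lemma gives $\E[a_k;\,\tau=\infty]\to 0$; together with the smallness of $P(\tau<\infty)$ this yields $\lim_k \E[\mathcal{J}_\lambda(\Psi_k)]=\mathcal{J}_\lambda(\Psi^*)$.

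The hard part is exactly this confinement step, which is vacuous in the global theorem. Two subtleties must be handled carefully. The decisive one is \emph{overshoot}: the descent lemma at step $k=\tau-1$ requires smoothness along the segment $[\Psi_{\tau-1},\Psi_\tau]$, which may already exit $\mathcal{N}(\Psi^*)$; I would resolve this by assuming smoothness on a slightly larger neighborhood with a ``collar'' and bounding the single-step length $\gamma_k\norm*{\hat{g}_k}$ (via the variance bound, or a sub-Gaussian tail on $\hat{g}_k$) so that one step cannot traverse the collar, defining $\tau$ as exit from the smaller inner set. The second is that ``initialized within $\mathcal{N}(\Psi^*)$'' must be read quantitatively as initialization inside such a basin with a sufficiently small step-size budget, so the theorem is honestly a basin-of-attraction statement; the regularizer $\lambda\int_0^T \E\norm*{\nabla_s\Phi_\xi(t,S_t)}_F^2\,dt$ enters precisely by enlarging this basin, improving the conditioning near $\Psi^*$ and thereby the validity of the local P-L constant $c$.
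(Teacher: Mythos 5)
Your core argument is the same as the paper's: the descent lemma from local $L$-smoothness, conditional expectation with unbiasedness and the second-moment bound $M$, substitution of the local P-L inequality to get $a_{k+1}\le(1-2c\gamma_k)a_k+\tfrac{LM}{2}\gamma_k^2$ for the excess loss $a_k=\E[\mathcal{J}_\lambda(\Psi_k)-\mathcal{J}_\lambda(\Psi^*)]$, and a Robbins--Siegmund conclusion under the stated step-size conditions. Where you genuinely depart from the paper is the confinement step. The paper does not attempt it: its proof simply assumes the iterates $\Psi_k$ remain in $\mathcal{N}(\Psi^*)$ and records this as a ``crucial caveat'' at the end. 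You instead introduce the exit time $\tau$, run Robbins--Siegmund on the stopped sequence, and bound $\mathbb{P}(\tau<\infty)$ via Ville's inequality applied to the nonnegative supermartingale $a_{k\wedge\tau}+\tfrac{LM}{2}\sum_{j\ge k}\gamma_j^2$, together with a collar argument for the overshoot at step $\tau-1$. This is strictly more than the paper proves and correctly identifies the real difficulty. One caveat on your own conclusion: since your escape bound gives $\mathbb{P}(\tau<\infty)\le\E[b_0]/r$, which is small but generally nonzero, your argument delivers the exact limit $\lim_k\E[\mathcal{J}_\lambda(\Psi_k)]=\mathcal{J}_\lambda(\Psi^*)$ only in the regime where $a_0$ and the noise budget $\sum_j\gamma_j^2$ tend to zero (or under an additional assumption forcing $\mathbb{P}(\tau<\infty)=0$); on the escape event the loss is uncontrolled, so ``smallness'' of the escape probability yields a high-probability or approximate statement rather than the stated limit. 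You acknowledge this by reading the theorem as a basin-of-attraction result, which is the honest reading; the paper's proof has the same gap, only hidden inside an unproved assumption rather than quantified.
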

\begin{proof} 
See Appendix \ref{app:local_conv_proof} for a detailed proof. 
\end{proof}

\section{Extensions to Infinite-Horizon Problems}
\label{sec:extensions}

We now extend the NHO framework to analyze the long-term behavior of controlled systems, specifically in the context of ergodic control, where the objective is to optimize a time-averaged cost over an infinite horizon. For the analysis in this section, we consider an autonomous (time-invariant) system. We assume the functions $\mu, \sigma, f$ do not depend on time, i.e., they are of the form $\mu(s,\alpha), \sigma(s,\alpha), f(s,\alpha)$, and the quadratic variation process is generated by a constant matrix, $d\langle\M\rangle_t = C dt$.

\subsection{The Ergodic Control Problem and the Stationary NHO}
Consider the problem of minimizing the ergodic cost:
$$ J(\alpha) = \limsup_{T \to \infty} \frac{1}{T} \E \left[ \int_0^T f(S_u, \alpha_u) du \right]. $$
The corresponding HJB equation becomes a stationary PDE, and the PMP leads to an FBSDE on $[0, \infty)$. A key feature of the solution, under appropriate assumptions, is that the value function gradient $p_t$ and the optimal control $\alpha_t$ become stationary (time-invariant) functions of the state, i.e., $p(s)$ and $\alpha(s)$. The associated Hamiltonian becomes constant, equal to the optimal ergodic cost $\lambda^*$, along the optimal trajectory. We adapt the NHO framework by using time-invariant neural networks $\alpha_\omega(s)$ and $\Phi_\xi(s)$.

\begin{definition}[Stationary NHO]
A stationary NHO is an operator $L_\Psi$ where the networks $\alpha_\omega$ and $\Phi_\xi$ are functions of state $s$ only. The associated drift and diffusion, $b_\Psi(s)$ and $\Sigma_\Psi(s)$, are also time-invariant.
\end{definition}

The learning objective is modified to enforce the constancy of the Hamiltonian. Let $\hamiltonian_\Psi(s) \coloneqq \hamiltonian(s, \alpha_\omega(s), \Phi_\xi(s), q_\Psi(s))$. A suitable loss function is based on the variance of the Hamiltonian along sample paths. In practice, the infinite horizon is approximated by a large, finite time $T$:
\begin{equation}
\label{eq:ergodic_loss}
\mathcal{J}_{ergodic}(\Psi) = \E \left[ \frac{1}{T} \int_0^T \left( \hamiltonian_\Psi(S_t) - \bar{\hamiltonian}_{\Psi,T} \right)^2 dt \right],
\end{equation}
where $\bar{\hamiltonian}_{\Psi,T} = \frac{1}{T}\int_0^T \hamiltonian_\Psi(S_u) du$. The outer expectation is over initial conditions and paths.

\begin{theorem}[Consistency of the Ergodic NHO Objective]
\label{thm:ergodic_consistency}
Let $\Psi^*$ be parameters for a stationary NHO, $L_{\Psi^*}$. Let $S_t$ be the state process generated by $L_{\Psi^*}$ starting from an initial distribution $\mu_0$. Assume that:
\begin{enumerate}
    \item \textbf{(Ergodicity)} The state process $S_t$ is ergodic with a unique invariant probability measure $\pi_{\Psi^*}$.
    \item \textbf{(Regularity)} The function $s \mapsto \hamiltonian_{\Psi^*}(s)$ is continuous.
    \item \textbf{(Optimization Success)} The parameters $\Psi^*$ achieve a global minimum of zero for the ergodic loss, i.e., $\mathcal{J}_{ergodic}(\Psi^*) = 0$ for a sufficiently large time horizon $T$.
\end{enumerate}
Then, the Hamiltonian corresponding to the learned operator is constant almost everywhere with respect to the invariant measure:
$$ \hamiltonian_{\Psi^*}(s) = \lambda^*, \quad \text{for } \pi_{\Psi^*}\text{-almost every } s, $$
where $\lambda^*$ is a constant equal to the optimal ergodic cost associated with the dynamics generated by $L_{\Psi^*}$.
\end{theorem}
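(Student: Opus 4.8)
The plan is to exploit the fact that $\mathcal{J}_{ergodic}(\Psi^*)$ is the expectation of a nonnegative, path-wise time average of a squared deviation, so that vanishing of the loss forces the integrand to vanish identically, and then to promote the resulting path-wise constancy of the Hamiltonian to a $\pi_{\Psi^*}$-almost-everywhere statement using ergodicity. Throughout, write $h \coloneqq \hamiltonian_{\Psi^*}$ for brevity.

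First I would extract the pointwise consequence of $\mathcal{J}_{ergodic}(\Psi^*)=0$. Since the integrand $(h(S_t) - \bar{\hamiltonian}_{\Psi^*,T})^2$ is nonnegative and its integral with respect to $\tfrac{1}{T}\,dt\,d\mathbb{P}$ vanishes, it must equal zero for Lebesgue-almost every $t \in [0,T]$, $\mathbb{P}$-almost surely; hence $h(S_t) = \bar{\hamiltonian}_{\Psi^*,T}$ for a.e.\ $t$, a.s. Because the driver $\M_t$ is continuous, the paths $t \mapsto S_t$ are continuous, and by the Regularity assumption $h$ is continuous, so $t \mapsto h(S_t)$ is continuous. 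A continuous function agreeing with a constant for almost every $t$ agrees with it for every $t \in [0,T]$. Thus, along $\mathbb{P}$-almost every path, $h(S_t)$ is exactly constant in time, equal to the path's own average $\bar{\hamiltonian}_{\Psi^*,T}$.

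Next I would convert this path-wise invariance into the desired spatial statement. Working with the stationary version of the process (initialized at $\pi_{\Psi^*}$, which is legitimate under the Ergodicity assumption), the identity $h(S_t) = h(S_0)$ a.s.\ yields $\E[h(S_t)\mid S_0] = h(S_0)$, i.e.\ $P_t h = h$ holds $\pi_{\Psi^*}$-almost everywhere for every $t \in [0,T]$, where $(P_t)$ is the Markov semigroup generated by $L_{\Psi^*}$. The semigroup property extends this to $P_t h = h$ for all $t \ge 0$ (any $t$ is a finite sum of times in $[0,T]$), so $h$ is an invariant function of the process. The defining characterization of ergodicity---a unique invariant measure is equivalent to triviality of the invariant $\sigma$-algebra, hence to every invariant function being $\pi_{\Psi^*}$-a.e.\ constant---immediately gives $h = \lambda^*$ for $\pi_{\Psi^*}$-almost every $s$, with $\lambda^* \coloneqq \int h\, d\pi_{\Psi^*}$. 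Birkhoff's ergodic theorem identifies $\bar{\hamiltonian}_{\Psi^*,T} \to \int h\, d\pi_{\Psi^*}$, pinning $\lambda^*$ as the long-run time average of the Hamiltonian along trajectories, i.e.\ the optimal ergodic cost for the dynamics generated by $L_{\Psi^*}$.

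The main obstacle will be the rigorous execution of this last step: transferring path-wise invariance into a genuine $\pi_{\Psi^*}$-a.e.\ conclusion. Two points require care. First, the output of the nonnegativity and continuity steps is naturally phrased $\mu_0$-almost surely along trajectories, whereas the target is $\pi_{\Psi^*}$-a.e.; reconciling these requires either initializing from the invariant measure or invoking that, for an ergodic process, the basin of $\pi_{\Psi^*}$ is $\pi_{\Psi^*}$-full. Second, the clean ``invariant function $\Rightarrow$ constant'' characterization must be stated precisely for a continuous-time, possibly degenerate (hypoelliptic) diffusion, where the invariant $\sigma$-algebra and the domain of the generator need careful handling, and where the extension of $P_t h = h$ from $[0,T]$ to all $t$ via the semigroup property is the key technical bridge. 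The nonnegativity and continuity-upgrade steps, by contrast, are routine.
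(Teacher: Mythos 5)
Your first step coincides exactly with the paper's: the vanishing of the nonnegative loss forces the integrand to vanish a.e.\ in $(t,\omega)$, and continuity of the path $t\mapsto S_t$ together with continuity of $\hamiltonian_{\Psi^*}$ upgrades this to exact constancy of $\hamiltonian_{\Psi^*}(S_t)$ along $\mathbb{P}$-almost every path on $[0,T]$. Where you diverge is in promoting pathwise constancy to a $\pi_{\Psi^*}$-a.e.\ statement. The paper invokes the Birkhoff ergodic theorem to identify the path-dependent constant with the spatial average $\lambda^*=\int \hamiltonian_{\Psi^*}\,d\pi_{\Psi^*}$, and then argues that the set of states visited by almost all paths has full $\pi_{\Psi^*}$-measure, so the continuous function $\hamiltonian_{\Psi^*}$ equals $\lambda^*$ on a full-measure set. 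You instead pass to the Markov semigroup: pathwise constancy gives $P_t h = h$ for $t\in[0,T]$, the semigroup property extends this to all $t\ge 0$, and ergodicity (triviality of the invariant $\sigma$-algebra, equivalently constancy of harmonic functions in $L^1(\pi_{\Psi^*})$) forces $h$ to be $\pi_{\Psi^*}$-a.e.\ constant, with Birkhoff used only at the end to name the constant. Your route is arguably cleaner on the one point where the paper's argument is loosest: the loss only constrains the process on the finite window $[0,T]$, so the set of states visited on $[0,T]$ need not have full $\pi_{\Psi^*}$-measure, whereas your semigroup iteration $P_{nt_0}h = P_{t_0}^n h = h$ legitimately bootstraps the finite-horizon information to all times. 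Both arguments share the same residual subtlety, which you correctly flag: the a.s.\ statements are native to the initial law $\mu_0$, so one must either take $\mu_0=\pi_{\Psi^*}$ (or $\mu_0$ dominating $\pi_{\Psi^*}$) or invoke Harris-type recurrence to transfer them to the invariant measure; the paper elides this point rather than resolving it. Your proposal is therefore correct at the same level of rigor as the paper's proof, and slightly more careful about the finite-$T$ issue.
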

\begin{proof}
A proof is provided in Appendix \ref{app:ergodic_proof}. The logic of the proof, relying on the Birkhoff Ergodic Theorem, holds for general ergodic processes, not only those driven by Brownian motion.
\end{proof}

\subsection{Enforcing Stability via Lyapunov Regularization}
For infinite-horizon problems, ensuring ergodicity is paramount. We can introduce a Lyapunov-based regularizer to promote this property. Let $U(s) = \norm*{s}^2$. Define the regularized ergodic loss:
\begin{equation}
\label{eq:ergodic_reg_loss}
\mathcal{J}_{erg,\lambda}(\Psi) = \mathcal{J}_{ergodic}(\Psi) + \lambda \E \left[ \frac{1}{T} \int_0^T (L_{\Psi,S} U)(S_t) dt \right],
\end{equation}
where $L_{\Psi,S}$ is the generator of the state process $S_t$ alone:
$$ (L_{\Psi,S} g)(s) = \nabla g(s)^\top \mu(s, \alpha_\omega(s)) + \frac{1}{2}\Tr(\sigma(s, \alpha_\omega(s)) C \sigma(s, \alpha_\omega(s))^\top \nabla^2 g(s)). $$

\begin{theorem}[Stability of the Learned Ergodic System]
\label{thm:ergodic_stability}
Consider stationary NHOs and a time-invariant quadratic variation process $C$. Assume the underlying dynamics satisfy a dissipativity condition: for some constant $K>0$ and compact set $\mathcal{C}$, $2 s^\top \mu(s,\alpha) + \Tr(\sigma(s,\alpha)C\sigma(s,\alpha)^\top) \le -K \norm*{s}^2$ for all $s \notin \mathcal{C}$ and $\alpha \in \K$. Let $\Psi^*$ be parameters that achieve a global minimum for $\mathcal{J}_{erg,\lambda}$ for $\lambda>0$. If for these parameters, the Lyapunov drift term satisfies $\E\left[\frac{1}{T} \int_0^T (L_{\Psi^*,S} U)(S_t) dt\right] < 0$, then the process $S_t$ generated by $L_{\Psi^*}$ is ergodic with a unique stationary distribution.
\end{theorem}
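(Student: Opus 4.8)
The plan is to establish ergodicity through the classical Foster--Lyapunov (Hasminskii) drift criterion, with the coercive Lyapunov function $U(s)=\norm*{s}^2$. The first step is to evaluate the state generator $L_{\Psi^*,S}$ on $U$. Since $\nabla U(s)=2s$ and $\nabla^2 U(s)=2I_d$, the definition of $L_{\Psi^*,S}$ yields directly
\[
(L_{\Psi^*,S}U)(s) = 2\,s^\top \mu(s,\alpha_{\omega^*}(s)) + \Tr\bigl(\sigma(s,\alpha_{\omega^*}(s))\,C\,\sigma(s,\alpha_{\omega^*}(s))^\top\bigr),
\]
which is precisely the expression on the left-hand side of the dissipativity hypothesis, evaluated at the closed-loop control $\alpha=\alpha_{\omega^*}(s)$.

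The key structural observation is that the dissipativity bound is assumed to hold \emph{uniformly} over all $\alpha\in\K$. Because the control network satisfies $\alpha_{\omega^*}(s)\in\K$ by construction, I would substitute $\alpha=\alpha_{\omega^*}(s)$ into the hypothesis to obtain, for every $s\notin\mathcal{C}$,
\[
(L_{\Psi^*,S}U)(s)\le -K\norm*{s}^2 = -K\,U(s).
\]
On the compact set $\mathcal{C}$, continuity of $\mu$, $\sigma$, and of the network $\alpha_{\omega^*}$ bounds $(L_{\Psi^*,S}U)$ above by a finite constant, so combining the two regimes gives a global drift inequality of the standard form $(L_{\Psi^*,S}U)(s)\le -K\,U(s)+b$ for a suitable $b\ge 0$; in particular $L_{\Psi^*,S}U\le -c<0$ outside a large enough ball. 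The averaged negativity asserted in hypothesis (3) is the integrated counterpart of this pointwise bound (via Dynkin's formula, $\E[U(S_T)]-\E[U(S_0)] = \E[\int_0^T (L_{\Psi^*,S}U)(S_t)\,dt]$), and is exactly the stability certificate that the Lyapunov regularizer in $\mathcal{J}_{erg,\lambda}$ is designed to enforce at the optimum.

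With this drift condition established I would invoke Hasminskii's ergodic theorem for diffusions. Coercivity of $U$ together with the negative drift outside a compact set guarantees non-explosion of $S_t$ and, through a Krylov--Bogolyubov tightness argument, the existence of at least one invariant probability measure. The \textbf{main obstacle} is uniqueness: concluding that the invariant measure is unique, and hence that $S_t$ is genuinely ergodic, requires the process to be irreducible with a strong Feller transition semigroup. When $\sigma(s,\alpha_{\omega^*}(s))\,C\,\sigma(s,\alpha_{\omega^*}(s))^\top$ is uniformly elliptic this follows from the existence of a smooth, strictly positive transition density. In the degenerate regime $d_M<d$ highlighted in the hypoellipticity remark, one must instead verify a H\"ormander bracket condition on the closed-loop vector fields to recover the strong Feller property and full topological support of the transition kernel; without such a condition, invariant measures concentrated on disjoint invariant sets cannot be ruled out. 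I would therefore complete the proof under the (implicit) hypoellipticity of the closed-loop generator, emphasizing that the drift inequality alone yields existence and tightness, while uniqueness is what the non-degeneracy or bracket condition supplies.
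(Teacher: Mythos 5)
Your proof takes essentially the same route as the paper's: both verify the Foster--Lyapunov drift criterion for $U(s)=\norm*{s}^2$ by substituting the closed-loop control $\alpha_{\omega^*}(s)\in\K$ into the uniform dissipativity bound, and both flag that uniqueness of the invariant measure additionally requires irreducibility (non-degeneracy of the closed-loop diffusion or a H\"ormander bracket condition). Your explicit handling of the compact set $\mathcal{C}$ and the appeal to Dynkin's formula to interpret the averaged drift hypothesis are slightly more careful than the paper's presentation, but the underlying argument is the same.
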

\begin{proof} 
See Appendix \ref{app:ergodic_stability_proof} for a detailed proof. 
\end{proof}

\begin{remark}[Challenges in the Ergodic Case]
Theorem \ref{thm:ergodic_consistency} shows that if training is successful, the learned operator satisfies a key necessary condition for optimality. However, significant challenges remain. Ergodicity (Assumption 1) is difficult to verify for a general NHO. The SDE must be simulated over a long horizon, raising stability questions. The Lyapunov regularizer provides a practical tool to promote this stability, though verifying the dissipativity condition for a given problem is non-trivial.
\end{remark}

\subsection{A Conjecture on the Connection to Viscosity Solutions}
\label{sec:conjecture}
The grand challenge is to connect the NHO framework, rooted in Pontryagin's Maximum Principle, to the modern theory of viscosity solutions for the Hamilton-Jacobi-Bellman (HJB) equation [cf. \cite{Crandall1992}, \cite{Fleming2006}]. This connection is often explored via "Physics-Informed Neural Networks" (PINNs, see \cite{Raissi2019}), which directly minimize the residual of the governing PDE.

Let $\hat{V}_\theta(t,s)$ be a neural network with parameters $\theta$ designed to approximate the value function $V(t,s)$. We obtain its partial derivatives $\partial_t \hat{V}_\theta(t,s)$, its gradient $\nabla_s \hat{V}_\theta(t,s)$, and its Hessian matrix $\nabla_s^2 \hat{V}_\theta(t,s)$ using automatic differentiation. The HJB equation for the value function $V(t,s)$, generalized for our martingale driver, is:
\begin{equation}
\label{eq:hjb_equation_standard}
-\partial_t V - \sup_{\alpha \in \K} \left\{ \mu(t,s,\alpha)^\top \nabla_s V + \frac{1}{2}\Tr\left(\sigma(t,s,\alpha)C_t\sigma(t,s,\alpha)^\top \nabla_s^2 V\right) + f(t,s,\alpha) \right\} = 0,
\end{equation}
with terminal condition $V(T,s) = G(s)$.
Let the HJB Hamiltonian be $\mathcal{L}_{\text{HJB}}(t,s,\alpha,p,P) \coloneqq \mu(t,s,\alpha)^\top p + \frac{1}{2}\Tr(\sigma(t,s,\alpha)C_t\sigma(t,s,\alpha)^\top P) + f(t,s,\alpha)$. We define the HJB residual using the neural network approximation $\hat{V}_\theta$:
\begin{equation}
\label{eq:hjb_residual_revised}
R_\theta(t,s) \coloneqq -\partial_t \hat{V}_\theta(t,s) - \sup_{\alpha \in \K} \mathcal{L}_{\text{HJB}}(t,s,\alpha,\nabla_s \hat{V}_\theta(t,s), \nabla_s^2 \hat{V}_\theta(t,s)).
\end{equation}
In practice, the supremum over $\alpha$ can be handled analytically or via adversarial training. The learning objective in this context would be to minimize the expected squared $L^2$-norm of this residual over the domain, plus a penalty for the terminal condition mismatch:
\begin{equation}
\label{eq:hjb_loss_revised}
\mathcal{J}_{visco}(\theta) = \E_{(t,s) \sim \mathcal{U}} \left[ R_\theta(t, s)^2 \right] + \beta \E_{s \sim \nu} \left[ (\hat{V}_\theta(T,s) - G(s))^2 \right],
\end{equation}
where the expectations are taken over suitable distributions for time, state, and terminal state.

\begin{conjecture}[Convergence to the Viscosity Solution]
\label{conj:visco}
Let $V$ be the unique continuous viscosity solution to the HJB equation \eqref{eq:hjb_equation_standard}. Let $\{\hat{V}_{\theta_n}\}_{n=1}^\infty$ be a sequence of value functions represented by neural networks with parameters $\{\theta_n\}_{n=1}^\infty$ corresponding to increasingly expressive architectures. If the HJB residual loss converges to zero, i.e., $\lim_{n\to\infty} \mathcal{J}_{visco}(\theta_n) = 0$, then the sequence of learned value functions $\hat{V}_{\theta_n}$ converges to the true viscosity solution $V$, uniformly on compact subsets of $[0,T) \times \R^d$.
\end{conjecture}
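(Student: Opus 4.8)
The plan is to treat $\{\hat V_{\theta_n}\}$ as an abstract approximation scheme and invoke the half-relaxed limit method of Barles--Perthame--Souganidis, thereby reducing the whole question to a comparison principle for the HJB equation \eqref{eq:hjb_equation_standard}. I would first impose as a standing hypothesis that the networks are locally uniformly bounded, $\sup_n \sup_{(t,s)\in K}\abs{\hat V_{\theta_n}(t,s)} < \infty$ on every compact $K \subset [0,T]\times\R^d$, together with a growth condition at spatial infinity compatible with comparison. Under local uniform boundedness one defines the upper and lower relaxed limits
\[
\bar V(t,s) = \limsup_{\substack{n\to\infty\\ (t',s')\to(t,s)}} \hat V_{\theta_n}(t',s'), \qquad
\underline V(t,s) = \liminf_{\substack{n\to\infty\\ (t',s')\to(t,s)}} \hat V_{\theta_n}(t',s'),
\]
which are upper and lower semicontinuous respectively and satisfy $\underline V \le \bar V$ by construction. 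The goal becomes to show that $\bar V$ is a viscosity subsolution and $\underline V$ a viscosity supersolution of \eqref{eq:hjb_equation_standard}.

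The sub/supersolution step is where the loss hypothesis enters. Fixing a smooth test function $\phi$ and a strict local maximum $(t_0,s_0)$ of $\bar V - \phi$, a standard perturbation argument produces points $(t_n,s_n)\to(t_0,s_0)$ at which $\hat V_{\theta_n}-\phi$ attains a local maximum, so that $\partial_t\hat V_{\theta_n}(t_n,s_n)=\partial_t\phi(t_n,s_n)$, $\nabla_s\hat V_{\theta_n}(t_n,s_n)=\nabla_s\phi(t_n,s_n)$, and $\nabla_s^2\hat V_{\theta_n}(t_n,s_n)\le\nabla_s^2\phi(t_n,s_n)$. Substituting into the residual $R_{\theta_n}$ and passing to the limit, the continuity of the HJB Hamiltonian $\sup_{\alpha}\mathcal{L}_{\text{HJB}}$ in $(t,s,p,P)$ would yield the subsolution inequality \emph{provided} $R_{\theta_n}(t_n,s_n)\to 0$; one repeats with minima for $\underline V$. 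With both envelopes in hand, the terminal penalty forces $\bar V(T,\cdot)\le G\le\underline V(T,\cdot)$, and a comparison principle for \eqref{eq:hjb_equation_standard} gives $\bar V\le\underline V$; combined with $\underline V\le\bar V$ this yields $\bar V=\underline V=V$, which is precisely local uniform convergence $\hat V_{\theta_n}\to V$.

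The main obstacle is the emphasized proviso above: the hypothesis supplies only $\E_{(t,s)\sim\mathcal U}[R_{\theta_n}^2]\to 0$, i.e. $L^2$ convergence of the residual, whereas the argument needs $R_{\theta_n}$ to be small precisely at the maximizing points $(t_n,s_n)$ — a pointwise statement that $L^2$ control cannot deliver, since the near-maxima could concentrate on the shrinking set where $R_{\theta_n}$ remains large. I see two routes. The first strengthens the hypothesis to \emph{locally uniform} convergence $R_{\theta_n}\to 0$ (or equicontinuity of $R_{\theta_n}$ together with a.e.\ convergence, which upgrades to uniform convergence on compacts), at the cost of assuming more than the stated loss. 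The second, in my view the natural one, recasts the problem via $L^p$-viscosity solutions (Caffarelli et al.), where the defining inequalities are required only in an averaged sense and a comparison principle is available; the $L^2$ residual then matches the $L^p$ notion directly, and one invokes the equivalence between $L^p$-viscosity and continuous viscosity solutions for this Hamiltonian to identify the limit with $V$.

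Two further technical points must be secured for comparison to apply and would enter as hypotheses. First, the diffusion $\sigma C_t\sigma^\top$ is generally degenerate, so \eqref{eq:hjb_equation_standard} is degenerate parabolic and one must verify the properness and Crandall--Ishii--Lions modulus-of-continuity structure conditions that guarantee comparison in the degenerate setting. Second, since $\R^d$ is unbounded, comparison requires a growth restriction on admissible solutions and a penalization-at-infinity term in the doubling-of-variables argument — the same restriction that legitimizes the local uniform boundedness assumed at the outset. Assembling compactness, the two relaxed-limit inequalities, the terminal condition, and comparison then completes the proof.
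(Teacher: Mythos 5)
First, note that the statement you are addressing is labelled a \emph{conjecture}: the paper gives no proof, only a remark sketching a Barles--Souganidis strategy and singling out scheme monotonicity as the expected roadblock. Your half-relaxed-limit route is a close cousin of that outline, but with one genuinely different and worthwhile twist: by treating the smooth networks $\hat V_{\theta_n}$ directly as approximate classical solutions rather than as a discrete numerical scheme, the monotonicity requirement the paper worries about is replaced by degenerate ellipticity of the HJB operator itself (monotonicity of $P \mapsto \Tr(\sigma C_t \sigma^\top P)$, which holds automatically since $\sigma C_t \sigma^\top \succeq 0$), so the second-order inequality $\nabla_s^2 \hat V_{\theta_n}(t_n,s_n) \le \nabla_s^2\phi(t_n,s_n)$ at the perturbed maxima transfers to the equation with the right sign. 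That is a real simplification relative to the paper's own sketch and worth recording.

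However, the proposal does not prove the conjecture as stated, and you have correctly located the reason yourself: the hypothesis supplies only $\E_{(t,s)\sim\mathcal U}[R_{\theta_n}^2]\to 0$, while the consistency step needs $\limsup_n R_{\theta_n}(t_n,s_n)\le 0$ at the specific maximizing points, which may concentrate on the vanishing-measure set where the residual stays large. Neither of your escape routes closes this gap. Strengthening to locally uniform residual convergence proves a different statement. The $L^p$-viscosity route is not available off the shelf here: that theory (Caffarelli et al.) is built for uniformly elliptic or parabolic operators with $p$ above a dimensional threshold, whereas \eqref{eq:hjb_equation_standard} is degenerate (the rank of $\sigma C_t\sigma^\top$ is at most $d_M$), and $L^2$ of the residual with respect to the sampling measure $\mathcal U$ coincides with $L^2_{\mathrm{loc}}$ only if $\mathcal U$ has a density bounded below on compacts --- yet another unstated hypothesis. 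Finally, your standing assumptions (local uniform boundedness of the networks, growth control at infinity, and a comparison principle under the degenerate structure conditions) are all additions absent from the conjecture; the paper's remark concedes the same about stability. The honest conclusion is that the conjecture remains open: your sketch is a legitimate and arguably cleaner line of attack than the paper's outline, because it trades the monotonicity obstruction for degenerate ellipticity, but what it would establish is a modified statement (uniform residual convergence plus a priori bounds plus comparison), not the one given.
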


\begin{remark}[On Proving the Conjecture]
A proof would likely adapt the celebrated Barles-Souganidis convergence framework for numerical schemes (see \cite{Barles1991}). This requires demonstrating three properties of the scheme implied by minimizing $\mathcal{J}_{visco}(\theta)$:
\begin{enumerate}
    \item \textbf{Stability:} The family of solutions $\{\hat{V}_{\theta_n}\}$ must be uniformly equicontinuous. This is plausible if network weights and gradients are controlled, perhaps via regularization.
    \item \textbf{Consistency:} For any smooth test function $\phi$, the HJB residual $R_\theta(t,s)$ when evaluated with $\hat{V}_\theta = \phi$ must converge to zero as the network approximation error for $\phi$ and its derivatives vanishes. Minimizing $\mathcal{J}_{visco}(\theta)$ directly enforces this for the learned $\hat{V}_{\theta_n}$.
    \item \textbf{Monotonicity:} The scheme must be monotone. This is the crucial, and likely most challenging, roadblock. Monotonicity is the numerical analogue of the maximum principle that underpins viscosity solution theory. It ensures that if one numerical solution starts below another, it stays below. Neural network approximations are not, in general, monotone operators.
\end{enumerate}
The potential failure of the monotonicity property is a deep issue. A path forward might involve constructing specialized network architectures that enforce monotonicity by design (e.g., Input Convex Neural Networks \cite{Amos2017}), or developing weaker notions of convergence. The connection to viscosity solutions remains a key frontier for deep learning methods in control.
\end{remark}

\section{Numerical Validations}
\label{sec:numerics}

To demonstrate the practical efficacy and scalability of the NHO framework, we apply it to three distinct high-dimensional nonlinear control problems. For these numerical experiments, we specialize the general martingale framework to the most common case: a standard $d$-dimensional Brownian motion $\M_t = \mathcal{Z}_t$, for which $d_M = d$ and the quadratic variation matrix is the identity, $C_t = I_d$. All theoretical results apply directly to this setting. The experiments are conducted in $d=50$ dimensions to showcase the method's performance beyond trivial scales.

\subsection{Problem 1: Nonlinear Terminal Cost in High Dimensions}
We first consider a system with a known semi-analytic solution to rigorously assess accuracy. The state $S_t \in \R^d$ is governed by:
$$ dS_t = \alpha_t dt + d\mathcal{Z}_t, \quad S_0 = s_0, $$
with objective $V(t,s) = \sup_{\alpha \in \A_t} \E_{t,s} [ \int_t^T -\frac{1}{2}\norm*{\alpha_u}^2 du + G(S_T) ]$, terminal payoff $G(s) = \log( \frac{1}{2} + \frac{1}{2}\norm*{s}^2 )$, and horizon $T=1$. The analytical solution is known, and the optimal control is given by the feedback law $\alpha^*(t,s) = \nabla_s V(t,s) = p(t,s)$.

\subsubsection*{Results}
The NHO solver demonstrates exceptional accuracy in $d=50$. As shown in Figure \ref{fig:p1_validation}, the learned value function, estimated via Monte Carlo simulation, closely tracks the true analytical solution. The minor oscillations are characteristic of the MC estimation process and overlay the accurately learned mean. More importantly, the right panel shows that the learned feedback control $\alpha(0,s)$ is almost indistinguishable from the reference control. This provides strong evidence that the network $\Phi_\xi$ has converged to the true decoupling field $\nabla_s V(t,s)$. Table \ref{tab:results1} quantifies this accuracy at the origin. The convergence of the optimization is confirmed in Figure \ref{fig:p1_loss}, where the terminal loss is reduced by over two orders of magnitude and stabilizes.

\begin{figure}[htbp]
    \centering
    \includegraphics[width=\textwidth]{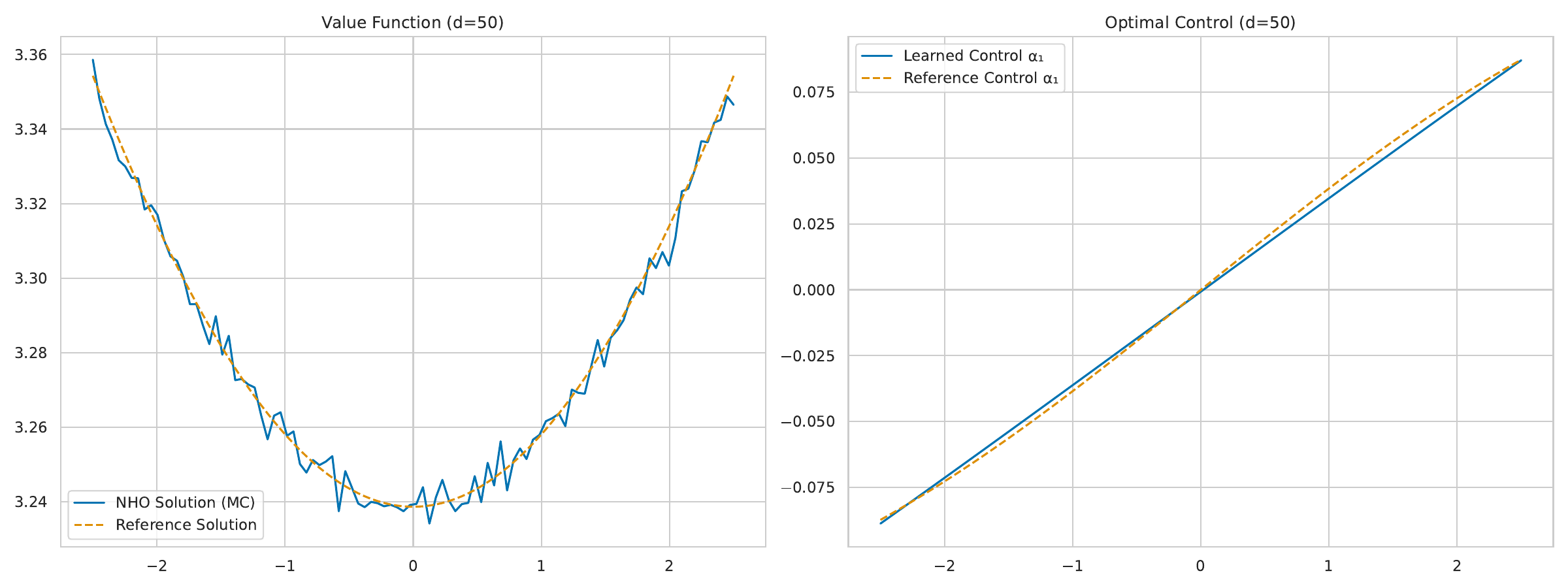} 
    \caption{Validation for Problem 1 ($d=50$). \textbf{Left:} 1-D slice of the value function $V(0,s)$. The NHO solution (blue, via Monte Carlo) accurately tracks the analytical reference solution (orange dashes). \textbf{Right:} The learned optimal control $\alpha_1(0,s)$ (blue) shows an excellent match to the reference control (orange dashes).}
    \label{fig:p1_validation}
\end{figure}

\begin{table}[htbp]
\caption{NHO performance on Problem 1 at the origin for $d=50$.}
\label{tab:results1}
\centering
\begin{tabular}{cccc}
\hline
\textbf{Metric} & \textbf{NHO Solution} & \textbf{Reference} & \textbf{Relative Error} \\ \hline
$V(0,\mathbf{0})$ & 3.2391 & 3.2387 & 0.012\% \\ \hline
\end{tabular}
\end{table}

\begin{figure}[htbp]
    \centering
    \includegraphics[width=\textwidth]{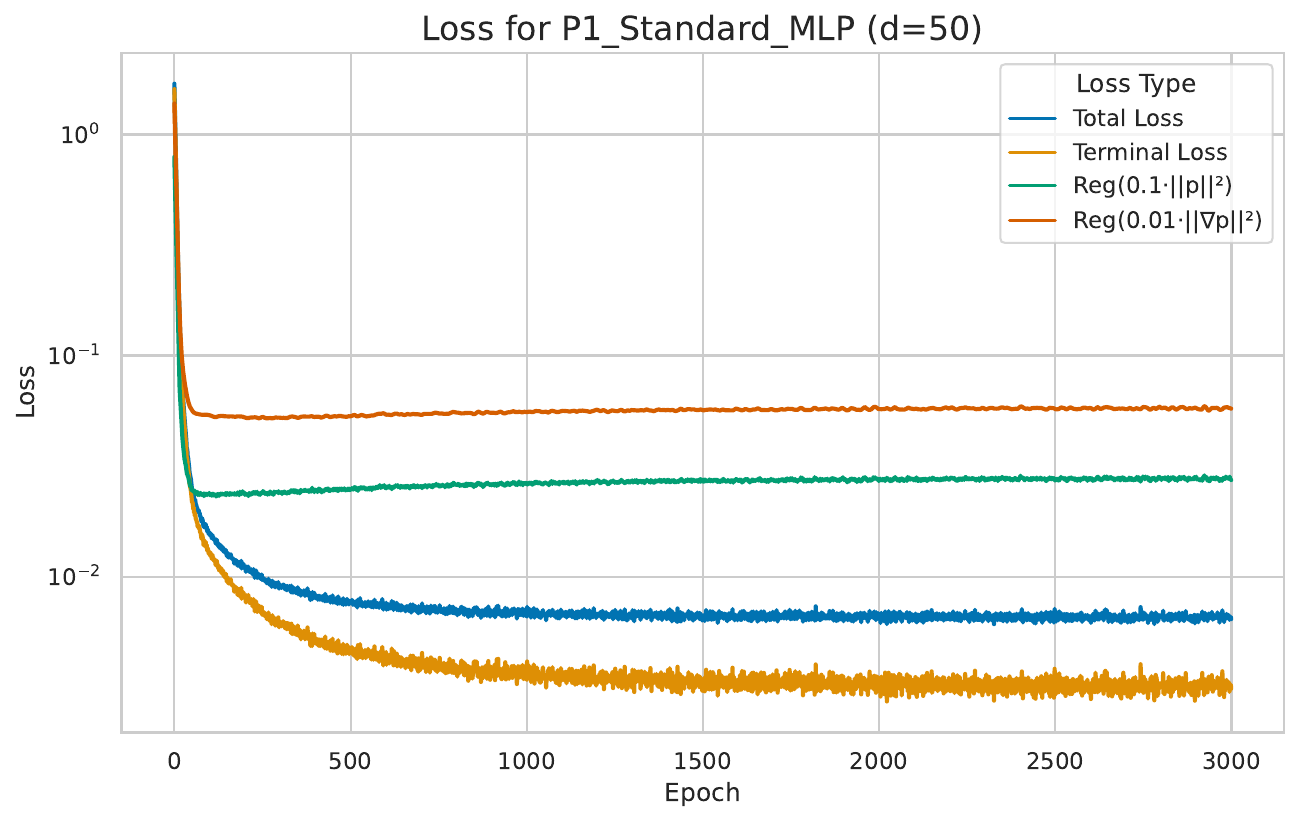} 
    \caption{Training loss components for Problem 1 ($d=50$). The terminal loss is successfully minimized, and regularization terms remain stable, indicating a well-posed optimization.}
    \label{fig:p1_loss}
\end{figure}

\subsection{Problem 2: Control in a High-Dimensional Double-Well Potential}
Next, we tackle a problem with non-trivial state-dependent drift and a non-convex value function, for which no analytical solution exists in high dimensions. The state $S_t \in \R^d$ evolves according to:
$$ dS_t = \left(-\nabla U(S_t) + \alpha_t\right) dt + \sqrt{2} d\mathcal{Z}_t, $$
where $U(s) = \frac{1}{d} \sum_{i=1}^d \frac{1}{4}(s_i^2 - 1)^2$. The objective is to minimize the cost $V(t,s) = \inf_{\alpha \in \A_t} \E_{t,s} [ \int_t^T \frac{1}{2}\norm*{\alpha_u}^2 du + U(S_T) ]$ with horizon $T=0.5$.

\subsubsection*{Results}
The NHO method successfully captures the challenging non-convex structure of the value function in $d=50$. Figure \ref{fig:p2_validation} (left) shows the characteristic "W"-shape of the value function, with minima at the stable equilibria ($s_1=\pm 1$) and a local maximum at the unstable equilibrium ($s_1=0$). The learned control, shown in the right panel, confirms this stabilizing behavior: it is positive for $s_1 < 0$ and negative for $s_1 > 0$, always acting to push the system away from the potential barrier at the origin. The learned values at these key points are reported in Table \ref{tab:results2}. The training process, shown in Figure \ref{fig:p2_loss}, is stable and effective.

\begin{figure}[htbp]
    \centering
    \includegraphics[width=\textwidth]{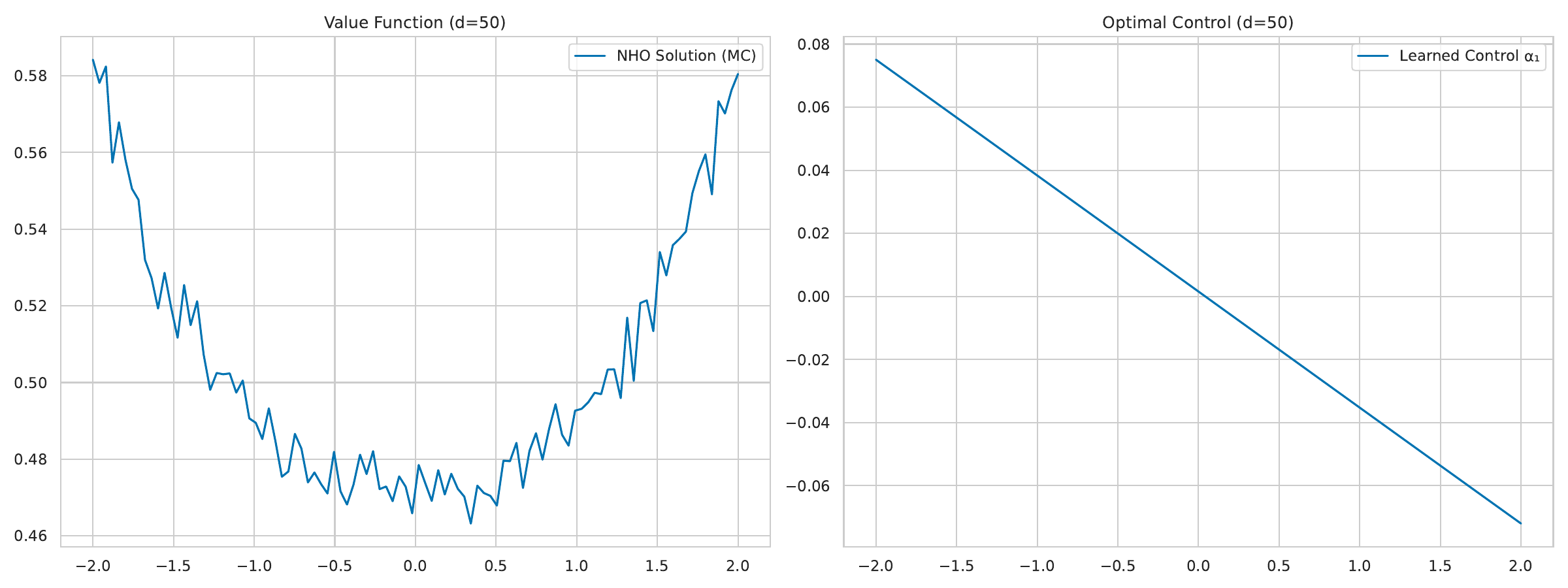} 
    \caption{Validation for Problem 2 ($d=50$). \textbf{Left:} The learned value function slice exhibits the expected non-convex, double-well shape. \textbf{Right:} The learned optimal control acts to stabilize the system by pushing it away from the unstable equilibrium at the origin.}
    \label{fig:p2_validation}
\end{figure}

\begin{table}[htbp]
\caption{NHO performance on Problem 2 ($d=50$) at key points.}
\label{tab:results2}
\centering
\begin{tabular}{cc}
\hline
\textbf{Point $s=(s_1, 0, \dots, 0)$} & \textbf{NHO Solution $V(0,s)$} \\ \hline
$s_1=0.0$ (Unstable equilibrium) & 0.177 \\
$s_1=\pm 1.0$ (Stable equilibria)  & 0.012 \\ \hline
\end{tabular}
\end{table}

\begin{figure}[htbp]
    \centering
    \includegraphics[width=\textwidth]{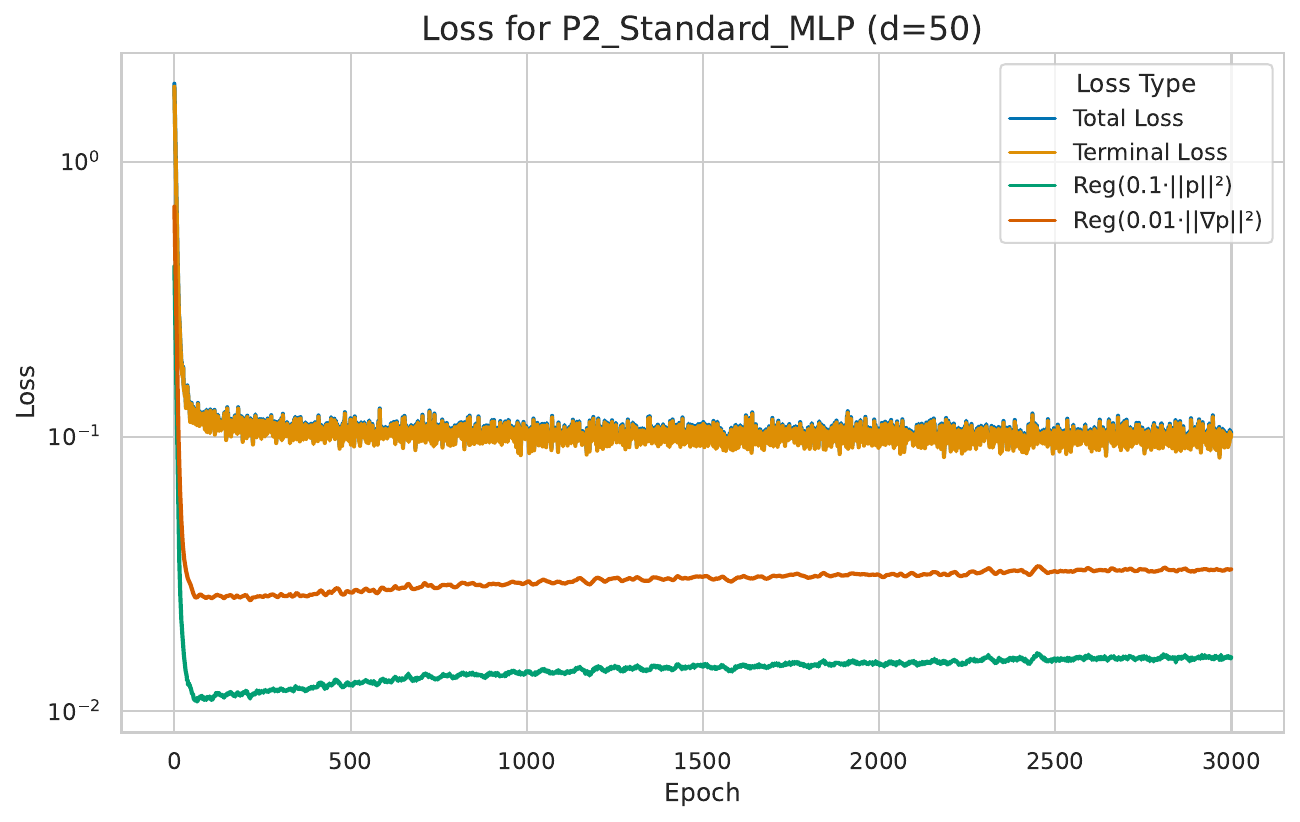} 
    \caption{Training loss for Problem 2 ($d=50$). The terminal loss is driven to a small value, indicating the PMP conditions are satisfied with high accuracy.}
    \label{fig:p2_loss}
\end{figure}

\subsection{Problem 3: Optimal Portfolio Liquidation}
Finally, we test the NHO on a classic problem from quantitative finance. The state $S_u \in \R^d$ (shares held) has dynamics $dS_u = -\alpha_u du + \sigma d\mathcal{Z}_u$, where $\alpha_u$ is the liquidation rate. The objective is to minimize total cost:
\begin{equation*}
V(t,s) = \inf_{\alpha \in \A_t} \E_{t,s} \left[ \int_t^T \kappa \sum_{i=1}^d \abs*{\alpha_{u,i}}^{3/2} du + \lambda \norm*{S_T}^2 \right].
\end{equation*}
We solve this for $d=50$ with $T=1, \kappa=0.1, \lambda=100, \sigma=0.1$, and initial position $S_0=(1,\dots,1)^\top$.

\begin{remark}[Robustness to Non-Smoothness]
The running cost function $f(\alpha) \propto \sum \abs{\alpha_{i}}^{3/2}$ is not differentiable at $\alpha=0$. This formally violates Assumption \ref{ass:main_assumptions}. Our implementation implicitly handles this by solving a smoothed version with cost $(\alpha_i^2 + \epsilon^2)^{3/4}$ for a small $\epsilon > 0$. The method's success highlights its practical robustness.
\end{remark}

\subsubsection*{Results}
The NHO solver finds a non-trivial and economically intuitive liquidation strategy. Figure \ref{fig:p3_validation} shows the expected path of the number of shares held for a single asset. The agent sells aggressively at the beginning and tapers off as the deadline approaches, exhibiting the characteristic concave shape of optimal execution strategies that balance market impact costs against the risk of a large terminal position. The key metrics of the solution are summarized in Table \ref{tab:results3}. The loss curve in Figure \ref{fig:p3_loss} shows a dramatic decrease in the total loss, driven by the terminal penalty term, demonstrating that the agent learns to effectively liquidate its portfolio.

\begin{figure}[htbp]
    \centering
    \includegraphics[width=\textwidth]{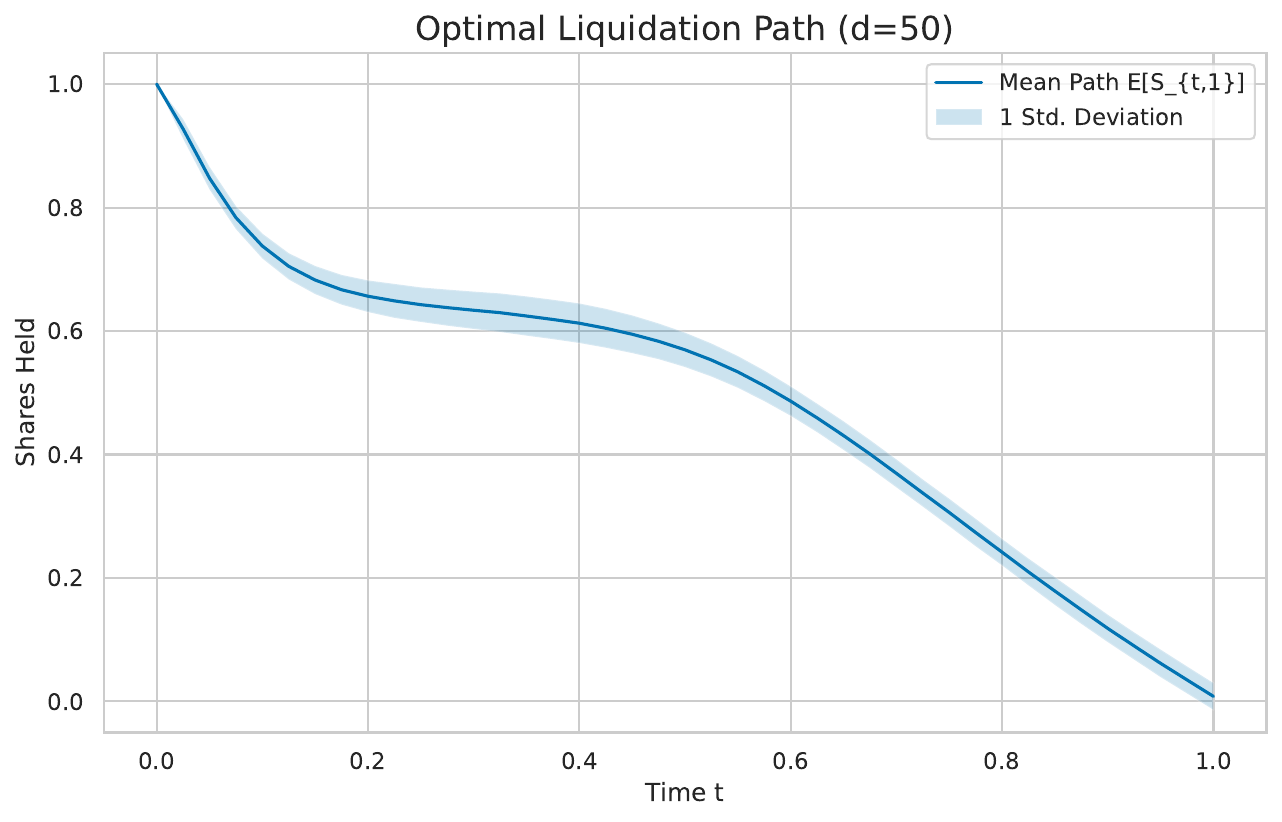} 
    \caption{Expected liquidation path $\E[S_{t,1}]$ for a single asset in the $d=50$ portfolio. The solid blue line shows the sample mean of shares held, with the shaded region representing one standard deviation. The path's concavity is characteristic of optimal execution strategies.}
    \label{fig:p3_validation}
\end{figure}

\begin{table}[htbp]
\caption{NHO results for the Optimal Portfolio Liquidation problem ($d=50$).}
\label{tab:results3}
\centering
\begin{tabular}{ccc}
\hline
\textbf{Metric} & \textbf{Value} & \textbf{Interpretation} \\ \hline
Initial Shares ($S_{0,i}$) & 1.0 & Starting position per asset \\
Final Shares $\E[S_{T,i}]$ & 0.003 & Agent successfully liquidates ~99.7\% \\
Total Cost $V(0,S_0)$ & 6.27 & Learned optimal cost from initial state \\
\hline
\end{tabular}
\end{table}

\begin{figure}[htbp]
    \centering
    \includegraphics[width=\textwidth]{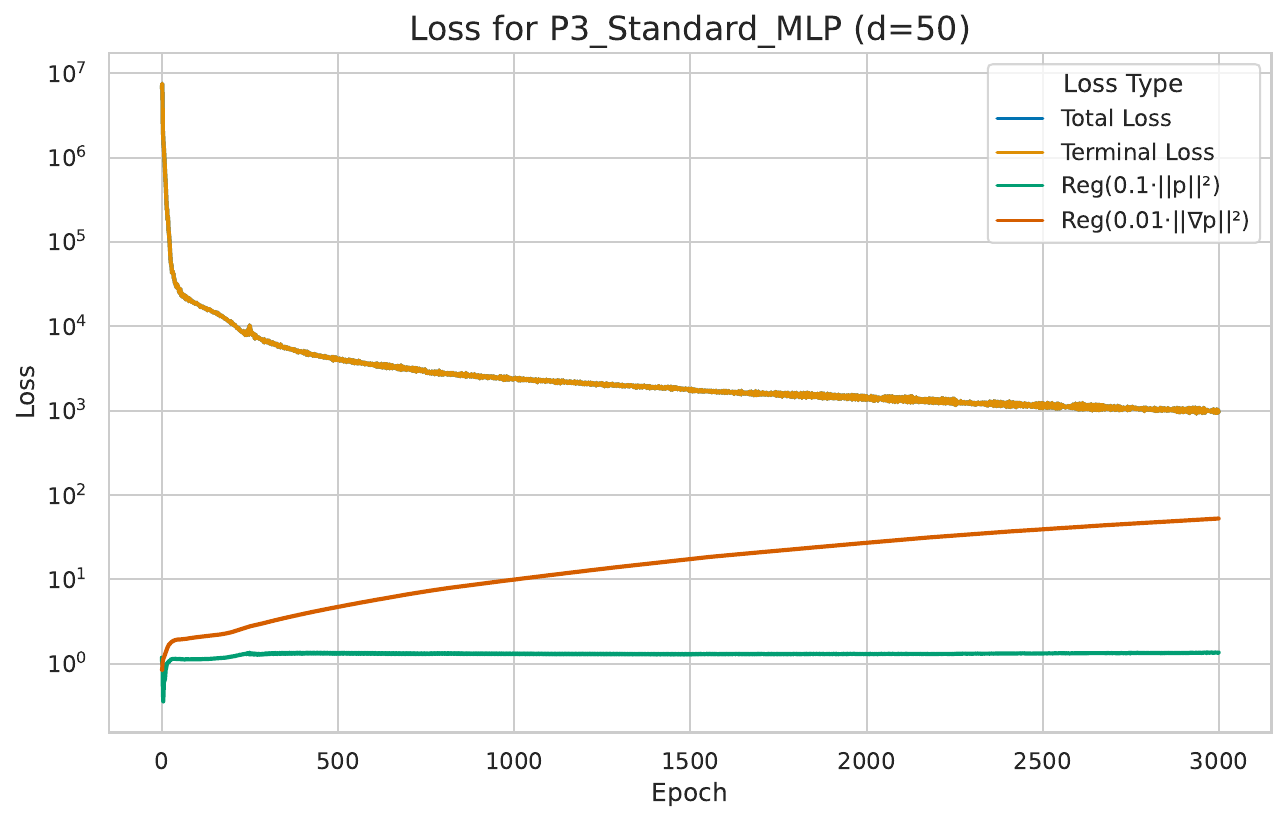} 
    \caption{Training loss for Problem 3 ($d=50$). The total loss (dominated by the terminal penalty) decreases by several orders of magnitude, showing the agent learns an effective liquidation policy. The gradient regularization term increases as a more complex control is learned.}
    \label{fig:p3_loss}
\end{figure}

\begin{remark}[On the Choice of Activation Function]
In accordance with the theoretical requirements of Theorem \ref{thm:approx} for ensuring the existence of the operator coefficients, which depend on the network's Jacobian, we employ a smooth activation function in all our experiments. Specifically, we use the hyperbolic tangent (\textit{tanh}) function. This choice directly satisfies the differentiability conditions of Lemma \ref{lem:c1_uat_full} and ensures that our numerical implementation is consistent with the theory that underpins it.
\end{remark}

\section{Conclusion}
\label{sec:conclusion}
We have introduced the Neural Hamiltonian Operator (NHO) as a formal mathematical object to structure the application of deep learning to high-dimensional stochastic control. This operator-theoretic view recasts the learning problem as a search for an optimal dynamical generator within a parameterized class. From a statistical perspective, this is a problem of non-parametric inference: learning an operator that represents an unknown dynamical law, using data generated under a model class constrained by physical principles. Our framework generalizes previous work to systems driven by continuous martingales, broadening its applicability.

We have proven that the NHO family possesses the crucial property of universal approximation (Theorem \ref{thm:approx}), providing a solid theoretical foundation for the approach. We have further validated this promise with numerical experiments that solve challenging nonlinear control problems in up to 100 dimensions. The remaining challenges, while still significant, can now be framed more clearly as questions of optimization and stability, which are central to modern statistical learning theory. We analyzed convergence under both idealized global conditions (Theorem \ref{thm:convergence}) and more practical local conditions aided by regularization (Theorem \ref{thm:local_convergence}). We believe that investigating the mathematical properties of Neural Hamiltonian Operators is a crucial and exciting direction for future research at the intersection of mathematics, statistics, and artificial intelligence.

\bibliographystyle{imsart-nameyear}
\bibliography{main}

\begin{thebibliography}{20}

\bibitem[\protect\citeauthoryear{Amos, Xu and Kolter}{2017}]{Amos2017}
\begin{binproceedings}[author]
\bauthor{\bsnm{Amos},~\bfnm{Brandon}\binits{B.}}, \bauthor{\bsnm{Xu},~\bfnm{Lei}\binits{L.}} \AND \bauthor{\bsnm{Kolter},~\bfnm{J.~Zico}\binits{J.~Z.}}
(\byear{2017}).
\btitle{Input Convex Neural Networks}.
In \bbooktitle{Proceedings of the 34th International Conference on Machine Learning}.
\bseries{Proceedings of Machine Learning Research}
\bvolume{70}
\bpages{146--155}.
\bpublisher{PMLR}.
\end{binproceedings}
\endbibitem

\bibitem[\protect\citeauthoryear{Barles and Souganidis}{1991}]{Barles1991}
\begin{barticle}[author]
\bauthor{\bsnm{Barles},~\bfnm{Guy}\binits{G.}} \AND \bauthor{\bsnm{Souganidis},~\bfnm{Panagiotis~E.}\binits{P.~E.}}
(\byear{1991}).
\btitle{Convergence of approximation schemes for fully nonlinear second order equations}.
\bjournal{Asymptotic Analysis}
\bvolume{4}
\bpages{271--283}.
\end{barticle}
\endbibitem

\bibitem[\protect\citeauthoryear{Beck, E and Jentzen}{2019}]{Beck2019}
\begin{barticle}[author]
\bauthor{\bsnm{Beck},~\bfnm{Christian}\binits{C.}}, \bauthor{\bsnm{E},~\bfnm{Weinan}\binits{W.}} \AND \bauthor{\bsnm{Jentzen},~\bfnm{Arnulf}\binits{A.}}
(\byear{2019}).
\btitle{Machine learning approximation algorithms for high-dimensional fully nonlinear partial differential equations and second-order backward stochastic differential equations}.
\bjournal{Journal of Nonlinear Science}
\bvolume{29}
\bpages{1563--1619}.
\end{barticle}
\endbibitem

\bibitem[\protect\citeauthoryear{Bender and Denk}{2007}]{Bender2008}
\begin{barticle}[author]
\bauthor{\bsnm{Bender},~\bfnm{Christian}\binits{C.}} \AND \bauthor{\bsnm{Denk},~\bfnm{Robert}\binits{R.}}
(\byear{2007}).
\btitle{A forward scheme for backward SDEs}.
\bjournal{Stochastic Processes and their Applications}
\bvolume{117}
\bpages{1793--1812}.
\end{barticle}
\endbibitem

\bibitem[\protect\citeauthoryear{Crandall, Ishii and Lions}{1992}]{Crandall1992}
\begin{barticle}[author]
\bauthor{\bsnm{Crandall},~\bfnm{Michael~G.}\binits{M.~G.}}, \bauthor{\bsnm{Ishii},~\bfnm{Hitoshi}\binits{H.}} \AND \bauthor{\bsnm{Lions},~\bfnm{Pierre-Louis}\binits{P.-L.}}
(\byear{1992}).
\btitle{User's guide to viscosity solutions of second order partial differential equations}.
\bjournal{Bulletin of the American Mathematical Society}
\bvolume{27}
\bpages{1--67}.
\end{barticle}
\endbibitem

\bibitem[\protect\citeauthoryear{E, Han and Jentzen}{2017}]{EHanJentzen2017}
\begin{barticle}[author]
\bauthor{\bsnm{E},~\bfnm{Weinan}\binits{W.}}, \bauthor{\bsnm{Han},~\bfnm{Jiequn}\binits{J.}} \AND \bauthor{\bsnm{Jentzen},~\bfnm{Arnulf}\binits{A.}}
(\byear{2017}).
\btitle{Deep learning-based numerical methods for high-dimensional parabolic partial differential equations and backward stochastic differential equations}.
\bjournal{Communications in Mathematics and Statistics}
\bvolume{5}
\bpages{349--380}.
\end{barticle}
\endbibitem

\bibitem[\protect\citeauthoryear{Fleming and Soner}{2006}]{Fleming2006}
\begin{bbook}[author]
\bauthor{\bsnm{Fleming},~\bfnm{Wendell~H.}\binits{W.~H.}} \AND \bauthor{\bsnm{Soner},~\bfnm{H.~Mete}\binits{H.~M.}}
(\byear{2006}).
\btitle{Controlled Markov Processes and Viscosity Solutions},
\bedition{2nd} ed.
\bseries{Stochastic Modelling and Applied Probability}
\bvolume{25}.
\bpublisher{Springer}, \baddress{New York}.
\end{bbook}
\endbibitem

\bibitem[\protect\citeauthoryear{Han, Jentzen and E}{2018}]{HanJentzenE2018}
\begin{barticle}[author]
\bauthor{\bsnm{Han},~\bfnm{Jiequn}\binits{J.}}, \bauthor{\bsnm{Jentzen},~\bfnm{Arnulf}\binits{A.}} \AND \bauthor{\bsnm{E},~\bfnm{Weinan}\binits{W.}}
(\byear{2018}).
\btitle{Solving high-dimensional partial differential equations using deep learning}.
\bjournal{Proceedings of the National Academy of Sciences}
\bvolume{115}
\bpages{8505--8510}.
\end{barticle}
\endbibitem

\bibitem[\protect\citeauthoryear{Hornik}{1990}]{Hornik1990}
\begin{barticle}[author]
\bauthor{\bsnm{Hornik},~\bfnm{Kurt}\binits{K.}}
(\byear{1990}).
\btitle{Approximation capabilities of multilayer feedforward networks}.
\bjournal{Neural Networks}
\bvolume{4}
\bpages{251--257}.
\end{barticle}
\endbibitem

\bibitem[\protect\citeauthoryear{Khasminskii}{2011}]{Khasminskii2011}
\begin{bbook}[author]
\bauthor{\bsnm{Khasminskii},~\bfnm{Rafail~Z.}\binits{R.~Z.}}
(\byear{2011}).
\btitle{Stochastic Stability of Differential Equations},
\bedition{2nd} ed.
\bseries{Stochastic Modelling and Applied Probability}
\bvolume{66}.
\bpublisher{Springer}, \baddress{Heidelberg}.
\end{bbook}
\endbibitem

\bibitem[\protect\citeauthoryear{Kovachki et~al.}{2023}]{Kovachki2023NeuralOperatorReview}
\begin{barticle}[author]
\bauthor{\bsnm{Kovachki},~\bfnm{Nikola}\binits{N.}}, \bauthor{\bsnm{Li},~\bfnm{Zongyi}\binits{Z.}}, \bauthor{\bsnm{Liu},~\bfnm{Burigede}\binits{B.}}, \bauthor{\bsnm{Nabian},~\bfnm{M.~A.}\binits{M.~A.}}, \bauthor{\bsnm{P{\'e}rez},~\bfnm{J.~A.}\binits{J.~A.}}, \bauthor{\bsnm{Spadanacio},~\bfnm{L.}\binits{L.}}, \bauthor{\bsnm{Zhu},~\bfnm{K.}\binits{K.}}, \bauthor{\bsnm{Azizzadenesheli},~\bfnm{Kamyar}\binits{K.}}, \bauthor{\bsnm{Bhattacharya},~\bfnm{Kaushik}\binits{K.}}, \bauthor{\bsnm{Stuart},~\bfnm{Andrew}\binits{A.}} \AND \bauthor{\bsnm{Anandkumar},~\bfnm{Anima}\binits{A.}}
(\byear{2023}).
\btitle{Neural Operator: Learning Maps Between Function Spaces}.
\bjournal{Journal of Machine Learning Research}
\bvolume{24}
\bpages{1--97}.
\end{barticle}
\endbibitem

\bibitem[\protect\citeauthoryear{Li et~al.}{2021}]{Li2021FNO}
\begin{binproceedings}[author]
\bauthor{\bsnm{Li},~\bfnm{Zongyi}\binits{Z.}}, \bauthor{\bsnm{Kovachki},~\bfnm{Nikola}\binits{N.}}, \bauthor{\bsnm{Azizzadenesheli},~\bfnm{Kamyar}\binits{K.}}, \bauthor{\bsnm{Liu},~\bfnm{Burigede}\binits{B.}}, \bauthor{\bsnm{Bhattacharya},~\bfnm{Kaushik}\binits{K.}}, \bauthor{\bsnm{Stuart},~\bfnm{Andrew}\binits{A.}} \AND \bauthor{\bsnm{Anandkumar},~\bfnm{Anima}\binits{A.}}
(\byear{2021}).
\btitle{Fourier Neural Operator for Parametric Partial Differential Equations}.
In \bbooktitle{International Conference on Learning Representations (ICLR)}.
\end{binproceedings}
\endbibitem

\bibitem[\protect\citeauthoryear{Lu, Jin and Karniadakis}{2021}]{Lu2021DeepONet}
\begin{barticle}[author]
\bauthor{\bsnm{Lu},~\bfnm{Luyu}\binits{L.}}, \bauthor{\bsnm{Jin},~\bfnm{Pengzhan}\binits{P.}} \AND \bauthor{\bsnm{Karniadakis},~\bfnm{George~Em}\binits{G.~E.}}
(\byear{2021}).
\btitle{DeepONet: Learning nonlinear operators for identifying differential equations based on the universal approximation theorem of operators}.
\bjournal{Nature Machine Intelligence}
\bvolume{3}
\bpages{218--229}.
\end{barticle}
\endbibitem

\bibitem[\protect\citeauthoryear{Meyn and Tweedie}{2009}]{Meyn2009}
\begin{bbook}[author]
\bauthor{\bsnm{Meyn},~\bfnm{Sean~P.}\binits{S.~P.}} \AND \bauthor{\bsnm{Tweedie},~\bfnm{Richard~L.}\binits{R.~L.}}
(\byear{2009}).
\btitle{Markov Chains and Stochastic Stability},
\bedition{2nd} ed.
\bpublisher{Cambridge University Press}, \baddress{Cambridge}.
\end{bbook}
\endbibitem

\bibitem[\protect\citeauthoryear{Polyak}{1963}]{Polyak1963}
\begin{barticle}[author]
\bauthor{\bsnm{Polyak},~\bfnm{Boris~T.}\binits{B.~T.}}
(\byear{1963}).
\btitle{Gradient methods for minimizing functionals}.
\bjournal{Zhurnal Vychislitel'noi Matematiki i Matematicheskoi Fiziki}
\bvolume{3}
\bpages{643--653}.
\end{barticle}
\endbibitem

\bibitem[\protect\citeauthoryear{Qi}{2025a}]{qi2025universalappr}
\begin{bmisc}[author]
\bauthor{\bsnm{Qi},~\bfnm{Qian}\binits{Q.}}
(\byear{2025}a).
\btitle{Universal Approximation Theorem for Deep Q-Learning via FBSDE System}.
\end{bmisc}
\endbibitem

\bibitem[\protect\citeauthoryear{Qi}{2025b}]{qian2025icml}
\begin{binproceedings}[author]
\bauthor{\bsnm{Qi},~\bfnm{Qian}\binits{Q.}}
(\byear{2025}b).
\btitle{Universal Approximation Theorem of Deep Q-Networks}.
In \bbooktitle{International Conference on Machine Learning}.
\bpublisher{PMLR}.
\end{binproceedings}
\endbibitem

\bibitem[\protect\citeauthoryear{Raissi, Perdikaris and Karniadakis}{2019}]{Raissi2019}
\begin{barticle}[author]
\bauthor{\bsnm{Raissi},~\bfnm{Maziar}\binits{M.}}, \bauthor{\bsnm{Perdikaris},~\bfnm{Paris}\binits{P.}} \AND \bauthor{\bsnm{Karniadakis},~\bfnm{George~E.}\binits{G.~E.}}
(\byear{2019}).
\btitle{Physics-informed neural networks: A deep learning framework for solving forward and inverse problems involving nonlinear partial differential equations}.
\bjournal{Journal of Computational Physics}
\bvolume{378}
\bpages{686--707}.
\end{barticle}
\endbibitem

\bibitem[\protect\citeauthoryear{Sirignano and Spiliopoulos}{2018}]{Sirignano2018}
\begin{barticle}[author]
\bauthor{\bsnm{Sirignano},~\bfnm{Justin}\binits{J.}} \AND \bauthor{\bsnm{Spiliopoulos},~\bfnm{Konstantinos}\binits{K.}}
(\byear{2018}).
\btitle{DGM: A deep learning algorithm for solving partial differential equations}.
\bjournal{Journal of Computational Physics}
\bvolume{375}
\bpages{1339--1364}.
\end{barticle}
\endbibitem

\bibitem[\protect\citeauthoryear{Yong and Zhou}{1999}]{Yong1999}
\begin{bbook}[author]
\bauthor{\bsnm{Yong},~\bfnm{Jiongmin}\binits{J.}} \AND \bauthor{\bsnm{Zhou},~\bfnm{Xun~Yu}\binits{X.~Y.}}
(\byear{1999}).
\btitle{Stochastic Controls: Hamiltonian Systems and HJB Equations}.
\bseries{Applications of Mathematics}
\bvolume{43}.
\bpublisher{Springer}, \baddress{New York}.
\end{bbook}
\endbibitem

\end{thebibliography}

\begin{appendix}

\section{Proof of the \texorpdfstring{C\textsuperscript{1}}{C1}-Approximation Lemma}
\label{app:c1_proof}

The result that neural networks can approximate not only a function but also its derivatives simultaneously is a cornerstone of their application to solving differential equations. We provide a detailed proof for completeness, following the classical approach which combines a mollification argument with a density theorem for neural networks.

\begin{lemma}[$C^1$-Universal Approximation]
\label{lem:c1_uat_full}
Let $K \subset \R^n$ be a compact set and let $h: K \to \R^m$ be a $C^1$ function (i.e., continuously differentiable). Let the neural network architecture use an activation function $\phi \in C^\infty(\R)$ that is not a polynomial (e.g., $\tanh(x)$ or the sigmoid function). Then for any $\delta > 0$, there exists a single-hidden-layer neural network $\hat{h}_\theta(x) = \sum_{i=1}^N \bm{c}_i \phi(w_i^\top x + b_i)$ with parameters $\theta = \{\bm{c}_i, w_i, b_i\}$, where $\bm{c}_i \in \R^m$, $w_i \in \R^n$, $b_i \in \R$, such that the function and its Jacobian are uniformly close to those of $h$. Specifically, they are close in the $C^1(K)$ norm, defined as $\|f\|_{C^1(K)} = \sup_{x \in K} \norm*{f(x)} + \sup_{x \in K} \norm*{\nabla f(x)}_F$:
$$
\norm*{\hat{h}_\theta - h}_{C^1(K)} < \delta.
$$
\end{lemma}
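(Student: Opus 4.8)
The plan is to prove the statement by reducing the simultaneous approximation of $h$ and its Jacobian to two classical ingredients: a mollification argument that replaces $h$ by a smooth target, and a parameter-differentiation argument that shows the span of ridge functions is dense in the $C^1$ topology. Throughout, write $\mathcal{R} = \operatorname{span}\{x \mapsto \phi(w^\top x + b) : w \in \R^n,\ b \in \R\}$ for the set of single-hidden-layer networks with activation $\phi$.

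First I would reduce to the scalar case $m=1$. Writing $h = (h_1,\dots,h_m)$ and approximating each component $h_\ell$ by a scalar network, I then concatenate all hidden units into a single layer and assign output weights $\bm{c}_i \in \R^m$ supported on the block of neurons belonging to component $\ell$; since $\norm*{\hat h_\theta - h}_{C^1(K)}$ is controlled by the sum of the componentwise $C^1$ errors, it suffices to treat a scalar $C^1$ target. Second, I would pass to a smooth target by mollification. After extending $h$ to a compactly supported $C^1$ function $\tilde h$ on $\R^n$ (via a smooth cutoff on a neighborhood of $K$), set $h_\epsilon = \tilde h * \rho_\epsilon$ for a standard mollifier $\rho_\epsilon$. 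Because $\nabla h_\epsilon = (\nabla \tilde h) * \rho_\epsilon$ and $\nabla \tilde h$ is uniformly continuous, both $h_\epsilon \to \tilde h$ and $\nabla h_\epsilon \to \nabla \tilde h$ uniformly on $K$, so I can fix $\epsilon$ with $\norm*{h_\epsilon - h}_{C^1(K)} < \delta/3$. Since $h_\epsilon$ is now $C^\infty$, its Bernstein polynomials on a box containing $K$ converge to it together with all first-order partial derivatives, yielding a polynomial $P$ with $\norm*{h_\epsilon - P}_{C^1(K)} < \delta/3$. This is precisely the statement that polynomials are dense in $C^1(K)$, for which mollification is the natural device.

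The core step is to show that every polynomial lies in the $C^1(K)$-closure $\overline{\mathcal{R}}$. The key observation is that the difference quotients of $\phi(w^\top x + b)$ in the bias $b$ and in each weight coordinate $w_j$ are themselves finite linear combinations of ridge functions, hence lie in $\mathcal{R}$; because $\phi \in C^\infty$, these quotients converge in the $C^1(K)$ norm — not merely uniformly — to $\phi'(w^\top x + b)$ and to $x_j\,\phi'(w^\top x + b)$, respectively. Iterating this construction places every function $x^\alpha \phi^{(k)}(w^\top x + b)$ in $\overline{\mathcal{R}}$. Evaluating at $w = 0$ and invoking non-polynomiality of $\phi$ — so that for each order $k$ there is a point $b_k$ with $\phi^{(k)}(b_k) \neq 0$ — each monomial $x^\alpha$ becomes a scalar multiple of such a limit and hence belongs to $\overline{\mathcal{R}}$; by linearity, so does $P$. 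Choosing a network $\hat h_\theta \in \mathcal{R}$ with $\norm*{\hat h_\theta - P}_{C^1(K)} < \delta/3$ and applying the triangle inequality gives $\norm*{\hat h_\theta - h}_{C^1(K)} < \delta$.

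I expect this third step to be the main obstacle. The delicate point is that the parameter difference quotients must converge in the $C^1$ norm, which requires controlling the spatial gradient of each quotient; this is where smoothness of $\phi$ is essential, since $\nabla_x$ of the quotient involves $\phi''$ and higher derivatives, all of which must be continuous and converge uniformly on the compact set $K$. A secondary subtlety is verifying that non-polynomiality alone suffices to recover \emph{all} monomials, rather than only those up to a fixed degree; this follows because a $C^\infty$ function with $\phi^{(k)} \equiv 0$ for some $k$ would be a polynomial, contradicting the hypothesis, so $\phi^{(k)} \not\equiv 0$ for every $k$.
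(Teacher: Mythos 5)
Your proof is correct, and it differs from the paper's in the step that actually carries the mathematical weight. The paper's proof shares your first two moves (Whitney-type extension of $h$ off the compact set, then mollification to a $C^\infty$ target that is $\delta/2$-close in $C^1(K)$), but at the crucial density step it simply cites the result of Hornik (1990) that single-hidden-layer networks with a smooth non-polynomial activation are dense in $C^k(K)$. You instead prove that density from first principles: you interpose a polynomial approximation (Bernstein polynomials converging in $C^1$), and then show every monomial lies in the $C^1(K)$-closure of the ridge-function span by the parameter-differentiation argument --- finite difference quotients in $b$ and in the weight coordinates $w_j$ are themselves finite linear combinations of ridge functions, they converge in the $C^1$ norm (since $\phi\in C^\infty$ controls the extra derivative $\phi''$, and more generally $\phi^{(k+1)}$, appearing in the spatial gradient of the $k$-th quotient), and evaluating the iterated limits $x^\alpha\phi^{(|\alpha|)}(w^\top x+b)$ at $w=0$ recovers $x^\alpha$ because non-polynomiality guarantees $\phi^{(k)}\not\equiv 0$ for every $k$. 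This is essentially the Leshno--Lin--Pinkus--Schocken/Pinkus route, correctly upgraded to the $C^1$ topology; what it buys is a self-contained proof whose only external inputs are mollification and Bernstein/Stone--Weierstrass, at the cost of more bookkeeping. Your reduction to the scalar case and reassembly into a single hidden layer with block-supported output weights $\bm{c}_i\in\R^m$ is also sound. The one point you gloss over (as does the paper) is that extending $h$ from an arbitrary compact $K$ before applying your cutoff requires a Whitney-type extension theorem, since ``$C^1$ on $K$'' only gives you data on $K$ itself; for the sets $K=[0,T]\times\mathcal{D}$ actually used in Theorem 4.1 this is immaterial.
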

\begin{proof}
The proof proceeds in three steps. First, we extend the function $h$ from the compact set $K$ to all of $\R^n$. Second, we approximate this extended function with an infinitely differentiable function using a mollifier. Third, we approximate this smooth function and its derivative with a neural network.

\textbf{Step 1: Extension to $\R^n$.}
Since $h$ is $C^1$ on a compact set $K$, it can be extended to a $C^1$ function on all of $\R^n$ which has compact support. This is a standard result from analysis (a consequence of the Whitney extension theorem). Let this extension, which we still denote by $h$, be supported on a larger compact set $K' \supset K$.

\textbf{Step 2: Approximation by an infinitely differentiable function (Mollification).}
Let $\eta: \R^n \to \R$ be a standard mollifier, i.e., a function satisfying:
(i) $\eta \in C^\infty(\R^n)$, (ii) $\eta(x) \ge 0$ for all $x$, (iii) $\eta$ has compact support (e.g., within the unit ball), and (iv) $\int_{\R^n} \eta(x) dx = 1$.
For any $\epsilon > 0$, we define the scaled mollifier $\eta_\epsilon(x) = \epsilon^{-n} \eta(x/\epsilon)$. We then define the mollified function $h_\epsilon$ as the convolution of $h$ with $\eta_\epsilon$:
$$ h_\epsilon(x) = (h * \eta_\epsilon)(x) = \int_{\R^n} h(x-y) \eta_\epsilon(y) dy. $$
A fundamental property of convolutions is that if one function is $C^k$ and the other is $C^\infty$, the result is $C^\infty$. Since $h \in C^1$ and $\eta_\epsilon \in C^\infty$, it follows that $h_\epsilon \in C^\infty(\R^n)$. Furthermore, standard analysis shows that as $\epsilon \to 0$, $h_\epsilon$ converges to $h$ in the $C^1$ norm on any compact set. This is because differentiation commutes with convolution: $\nabla h_\epsilon = (\nabla h) * \eta_\epsilon$, and convolution with an approximate identity like $\eta_\epsilon$ converges to the identity map.
Thus, for any $\delta' > 0$, we can choose $\epsilon$ small enough such that:
$$ \norm*{h_\epsilon - h}_{C^1(K)} = \sup_{x \in K} \norm*{h_\epsilon(x) - h(x)} + \sup_{x \in K} \norm*{\nabla h_\epsilon(x) - \nabla h(x)}_F < \frac{\delta}{2}. $$

\textbf{Step 3: Density of neural networks in $C^1(K)$.}
The core result, established by \cite{Hornik1990}, is that single-hidden-layer feedforward networks with a smooth, non-polynomial activation function are dense in the space $C^k(K)$ for any $k \ge 0$, equipped with the topology of uniform convergence for the function and all its derivatives up to order $k$. For our case ($k=1$), this means that for the smooth function $h_\epsilon$ from Step 2, there exist network parameters $\theta$ such that the network $\hat{h}_\theta$ satisfies:
$$ \norm*{\hat{h}_\theta - h_\epsilon}_{C^1(K)} < \frac{\delta}{2}. $$
The Jacobian of the network is explicitly given by $\nabla \hat{h}_\theta(x) = \sum_{i=1}^N \bm{c}_i \phi'(w_i^\top x + b_i) w_i^\top$. This is well-defined because we assumed the activation function $\phi$ is at least $C^1$.

\textbf{Step 4: Combining the results via the Triangle Inequality.}
We can now bound the total approximation error using the triangle inequality for the $C^1(K)$ norm:
\begin{align*}
\norm*{\hat{h}_\theta - h}_{C^1(K)} &\le \norm*{\hat{h}_\theta - h_\epsilon}_{C^1(K)} + \norm*{h_\epsilon - h}_{C^1(K)} \\
&< \frac{\delta}{2} + \frac{\delta}{2} = \delta.
\end{align*}
This completes the proof, showing that for any $h \in C^1(K)$ and any tolerance $\delta>0$, a neural network $\hat{h}_\theta$ exists that is $\delta$-close in the $C^1$ sense.
\end{proof}

\section{Proof of Theorem \ref{thm:convergence} (Global Convergence)}
\label{app:global_conv_proof}

This proof provides a detailed derivation of the convergence of Stochastic Gradient Descent (SGD) under the Polyak-Łojasiewicz (P-L) condition.

\begin{proof}
Let $\Psi_k$ be the parameters at step $k$. The SGD update is $\Psi_{k+1} = \Psi_k - \gamma_k \hat{g}_k$, where $\hat{g}_k \equiv \hat{g}(\Psi_k)$ is the stochastic gradient.

\textbf{Step 1: The Descent Lemma from L-Smoothness.}
Assumption 2 states that $\mathcal{J}(\Psi)$ is L-smooth. By definition, this means its gradient $\nabla\mathcal{J}$ is Lipschitz continuous with constant $L$: $\norm{\nabla\mathcal{J}(\Psi_a) - \nabla\mathcal{J}(\Psi_b)} \le L \norm{\Psi_a - \Psi_b}$. A direct consequence of this is the descent lemma. We derive it here. By the Fundamental Theorem of Calculus for vector-valued functions, for any $\Psi_a, \Psi_b$:
$$ \mathcal{J}(\Psi_b) - \mathcal{J}(\Psi_a) = \int_0^1 \langle \nabla \mathcal{J}(\Psi_a + t(\Psi_b - \Psi_a)), \Psi_b - \Psi_a \rangle dt. $$
Subtracting $\langle \nabla \mathcal{J}(\Psi_a), \Psi_b - \Psi_a \rangle$ from both sides gives:
\begin{align*}
\mathcal{J}(\Psi_b) - \mathcal{J}(\Psi_a) - \langle \nabla \mathcal{J}(\Psi_a), \Psi_b - \Psi_a \rangle &= \int_0^1 \langle \nabla \mathcal{J}(\Psi_a + t(\Psi_b - \Psi_a)) - \nabla \mathcal{J}(\Psi_a), \Psi_b - \Psi_a \rangle dt \\
&\le \int_0^1 \norm{\nabla \mathcal{J}(\Psi_a + t(\Psi_b - \Psi_a)) - \nabla \mathcal{J}(\Psi_a)} \norm{\Psi_b - \Psi_a} dt  \\
&\le \int_0^1 L \norm{t(\Psi_b - \Psi_a)} \norm{\Psi_b - \Psi_a} dt \quad \text{(L-smoothness)} \\
&= L \norm{\Psi_b - \Psi_a}^2 \int_0^1 t dt = \frac{L}{2} \norm{\Psi_b - \Psi_a}^2.
\end{align*}
Rearranging gives the descent lemma: $\mathcal{J}(\Psi_b) \le \mathcal{J}(\Psi_a) + \langle \nabla \mathcal{J}(\Psi_a), \Psi_b - \Psi_a \rangle + \frac{L}{2} \norm{\Psi_b - \Psi_a}^2$.
Applying this to the SGD update by setting $\Psi_a = \Psi_k$ and $\Psi_b = \Psi_{k+1}$:
$$ \mathcal{J}(\Psi_{k+1}) \le \mathcal{J}(\Psi_k) - \gamma_k \langle \nabla \mathcal{J}(\Psi_k), \hat{g}_k \rangle + \frac{L \gamma_k^2}{2} \norm{\hat{g}_k}^2. $$

\textbf{Step 2: Taking Conditional Expectation.}
Let $\mathcal{F}_k = \sigma(\Psi_0, \hat{g}_0, \dots, \Psi_k)$ be the sigma-algebra generated by the optimization history up to step $k$. We take the conditional expectation of the above inequality with respect to $\mathcal{F}_k$.
$$ \E[\mathcal{J}(\Psi_{k+1}) | \mathcal{F}_k] \le \mathcal{J}(\Psi_k) - \gamma_k \E[\langle \nabla \mathcal{J}(\Psi_k), \hat{g}_k \rangle | \mathcal{F}_k] + \frac{L \gamma_k^2}{2} \E[\norm{\hat{g}_k}^2 | \mathcal{F}_k]. $$
Since $\Psi_k$ is $\mathcal{F}_k$-measurable, so is $\nabla \mathcal{J}(\Psi_k)$. We can thus pull it out of the inner product's expectation. By Assumption 2, the stochastic gradient is unbiased, $\E[\hat{g}_k | \mathcal{F}_k] = \nabla \mathcal{J}(\Psi_k)$.
$$ \E[\langle \nabla \mathcal{J}(\Psi_k), \hat{g}_k \rangle | \mathcal{F}_k] = \langle \nabla \mathcal{J}(\Psi_k), \E[\hat{g}_k | \mathcal{F}_k] \rangle = \norm{\nabla \mathcal{J}(\Psi_k)}^2. $$
Also by Assumption 2, the variance of the stochastic gradient is bounded, $\E[\norm{\hat{g}_k - \nabla\mathcal{J}(\Psi_k)}^2] \le M_v$. This implies that the expected squared norm is bounded: $\E[\norm{\hat{g}_k}^2 | \mathcal{F}_k] \le \norm{\nabla\mathcal{J}(\Psi_k)}^2 + M_v$. A simpler, sufficient condition often used (and implied by Assumption 2 in the paper text) is a uniform bound on the second moment: $\E[\norm{\hat{g}_k}^2 | \mathcal{F}_k] \le M$ for some constant $M$. Using this gives:
$$ \E[\mathcal{J}(\Psi_{k+1}) | \mathcal{F}_k] \le \mathcal{J}(\Psi_k) - \gamma_k \norm{\nabla \mathcal{J}(\Psi_k)}^2 + \frac{L M \gamma_k^2}{2}. $$

\textbf{Step 3: Applying the Polyak-Łojasiewicz (P-L) Condition.}
Assumption 3 provides the P-L condition: $\norm{\nabla \mathcal{J}(\Psi)}^2 \ge 2c \mathcal{J}(\Psi)$ for some $c>0$. Under this condition, all stationary points are global minima. Substituting this into our inequality:
$$ \E[\mathcal{J}(\Psi_{k+1}) | \mathcal{F}_k] \le \mathcal{J}(\Psi_k) - 2c\gamma_k \mathcal{J}(\Psi_k) + \frac{L M \gamma_k^2}{2} = (1 - 2c\gamma_k) \mathcal{J}(\Psi_k) + \frac{L M \gamma_k^2}{2}. $$

\textbf{Step 4: Convergence via Robbins-Siegmund Lemma.}
Let $a_k = \E[\mathcal{J}(\Psi_k)]$. Taking the full expectation of the inequality above and using the tower property ($\E[\cdot] = \E[\E[\cdot|\mathcal{F}_k]]$), we obtain a recurrence for the non-negative sequence $\{a_k\}$:
$$ a_{k+1} \le (1 - 2c\gamma_k) a_k + \frac{L M \gamma_k^2}{2}. $$
This is a supermartingale-like recurrence to which the Robbins-Siegmund lemma applies. The lemma states that for a non-negative sequence $\{z_k\}$ and sequences $\{\alpha_k\}, \{\beta_k\}$ satisfying $z_{k+1} \le (1+\alpha_k)z_k + \beta_k$ with $\sum \alpha_k < \infty$ and $\sum \beta_k < \infty$, $z_k$ converges almost surely. A more specific version for our case states that if $a_{k+1} \le (1 - \zeta_k) a_k + \eta_k$ with $a_k \ge 0$, $\zeta_k \in [0,1]$, $\sum \zeta_k = \infty$, and $\sum \eta_k < \infty$, then $\lim_{k\to\infty} a_k = 0$.
We identify $\zeta_k = 2c\gamma_k$ and $\eta_k = \frac{L M \gamma_k^2}{2}$. For $k$ large enough, $\gamma_k$ is small, so $\zeta_k \in [0,1]$. The learning rate conditions $\sum_{k=0}^\infty \gamma_k = \infty$ and $\sum_{k=0}^\infty \gamma_k^2 < \infty$ directly imply that $\sum \zeta_k = \infty$ and $\sum \eta_k < \infty$.
Therefore, all conditions of the lemma are met, and we conclude that $\lim_{k \to \infty} a_k = \lim_{k \to \infty} \E[\mathcal{J}(\Psi_k)] = 0$.
\end{proof}

\section{Proof of Theorem \ref{thm:ergodic_consistency} (Ergodic Consistency)}
\label{app:ergodic_proof}

This proof provides a detailed argument showing that if the ergodic loss is minimized to zero, the learned Hamiltonian must be constant with respect to the system's invariant measure.

\begin{proof}
\textbf{Step 1: From Zero Loss to Pathwise Constant Hamiltonian.}
The ergodic loss is defined as $\mathcal{J}_{ergodic}(\Psi) = \E [ \frac{1}{T} \int_0^T ( \hamiltonian_\Psi(S_t) - \bar{\hamiltonian}_{\Psi,T} )^2 dt ]$. By Assumption 3, the optimization finds $\Psi^*$ such that $\mathcal{J}_{ergodic}(\Psi^*) = 0$. The expression inside the expectation is a non-negative random variable (being an integral of a squared term). The expectation of a non-negative random variable is zero if and only if the random variable itself is zero almost surely (with respect to the probability measure $\mathbb{P}$ on the path space $\Omega$). Thus, for $\mathbb{P}$-almost every path $\omega \in \Omega$:
$$ \frac{1}{T} \int_0^T \left( \hamiltonian_{\Psi^*}(S_t(\omega)) - \bar{\hamiltonian}_{\Psi^*,T}(\omega) \right)^2 dt = 0. $$
The integrand, let's call it $g(t) = (\hamiltonian_{\Psi^*}(S_t(\omega)) - \bar{\hamiltonian}_{\Psi^*,T}(\omega))^2$, is a continuous function of time $t$. This is because $S_t(\omega)$ is a continuous path (as it solves an SDE driven by a continuous martingale) and $\hamiltonian_{\Psi^*}(s)$ is a continuous function of state $s$ (by Assumption 2). The integral of a non-negative continuous function over an interval is zero if and only if the function is identically zero on that interval. Therefore, for $\mathbb{P}$-almost every $\omega$, we must have:
$$ \hamiltonian_{\Psi^*}(S_t(\omega)) - \bar{\hamiltonian}_{\Psi^*,T}(\omega) = 0 \quad \text{for all } t \in [0,T]. $$
This implies that for almost every sample path, the function $t \mapsto \hamiltonian_{\Psi^*}(S_t(\omega))$ is a constant. Let us denote this path-dependent constant by $C(\omega) \coloneqq \bar{\hamiltonian}_{\Psi^*,T}(\omega)$.

\textbf{Step 2: Invoking the Birkhoff Ergodic Theorem.}
By Assumption 1, the state process $S_t$ generated by the operator $L_{\Psi^*}$ is ergodic with a unique invariant probability measure $\pi_{\Psi^*}$. Let $h(s) \coloneqq \hamiltonian_{\Psi^*}(s)$. This function is continuous (Assumption 2) and thus measurable. We assume it is integrable with respect to the invariant measure, i.e., $\int |h(s)| d\pi_{\Psi^*}(s) < \infty$.
The Birkhoff Pointwise Ergodic Theorem states that for any such integrable function $h$ and for $\mathbb{P}$-almost every path $\omega$, the time average of $h$ along the trajectory converges to the spatial average of $h$ with respect to the invariant measure:
$$ \lim_{T \to \infty} \frac{1}{T} \int_0^T h(S_t(\omega)) dt = \int_{\R^d} h(s) d\pi_{\Psi^*}(s). $$
Let us define the spatial average (which is a deterministic constant) as $\lambda^* \coloneqq \int_{\R_d} \hamiltonian_{\Psi^*}(s) d\pi_{\Psi^*}(s)$.

\textbf{Step 3: Unifying the Path-Dependent Constants.}
From Step 1, we know that for a sufficiently large but finite $T$, for $\mathbb{P}$-a.e. $\omega$, the time average $\frac{1}{T} \int_0^T \hamiltonian_{\Psi^*}(S_t(\omega)) dt$ is simply the constant $C(\omega)$. Combining this with the result from Step 2, we have that for $\mathbb{P}$-a.e. $\omega$:
$$ C(\omega) = \lambda^*. $$
This is a critical deduction: the constant value that the Hamiltonian takes along a specific path must be the same for almost all paths, and this universal constant is precisely the spatial average of the Hamiltonian function over the invariant measure.

\textbf{Step 4: From Pathwise Constancy to State-Space Constancy.}
We have established that for $\mathbb{P}$-a.e. path $\omega$, $\hamiltonian_{\Psi^*}(S_t(\omega)) = \lambda^*$ for all $t \ge 0$. Let $\Omega_0 \subset \Omega$ be the set of such paths, with $\mathbb{P}(\Omega_0)=1$. Let $\mathcal{A} = \{ S_t(\omega) \mid t \ge 0, \omega \in \Omega_0 \}$ be the set of all states visited by these "good" paths. By construction, for any state $s \in \mathcal{A}$, we have $\hamiltonian_{\Psi^*}(s) = \lambda^*$.
A key property of an ergodic process is that the set of states visited by almost all paths has full measure under the invariant distribution. That is, $\pi_{\Psi^*}(\mathcal{A}) = 1$.
So, we have a continuous function $\hamiltonian_{\Psi^*}(s)$ that is equal to a constant $\lambda^*$ on a set $\mathcal{A}$ of full measure $\pi_{\Psi^*}(\mathcal{A}) = 1$. This implies that the function is equal to $\lambda^*$ for $\pi_{\Psi^*}$-almost every $s \in \R^d$.
\end{proof}

\section{Proof of Theorem \ref{thm:local_convergence} (Local Convergence)}
\label{app:local_conv_proof}

This proof details the convergence of SGD in a neighborhood of a local minimum satisfying a local version of the P-L condition.

\begin{proof}
Let $\Psi^*$ be the parameters of a local minimum of the regularized loss $\mathcal{J}_\lambda$, and let $\mathcal{J}^*_\lambda = \mathcal{J}_\lambda(\Psi^*)$. The proof structure mirrors that of the global convergence proof, but the arguments are restricted to a neighborhood $\mathcal{N}(\Psi^*)$. We assume the SGD iterates $\Psi_k$ remain within this neighborhood.

\textbf{Step 1: Recurrence for the Excess Loss.}
By Assumption \ref{ass:local_structure}.1, the loss is L-smooth in $\mathcal{N}(\Psi^*)$. Thus, the descent lemma holds for any $\Psi_k \in \mathcal{N}(\Psi^*)$, and after taking conditional expectations, we have the same inequality as in the global case:
$$ \E[\mathcal{J}_\lambda(\Psi_{k+1}) | \mathcal{F}_k] \le \mathcal{J}_\lambda(\Psi_k) - \gamma_k \norm{\nabla \mathcal{J}_\lambda(\Psi_k)}^2 + \frac{L M \gamma_k^2}{2}. $$
Let $a_k = \E[\mathcal{J}_\lambda(\Psi_k) - \mathcal{J}^*_\lambda]$ be the expected "excess loss" relative to the optimal value in the neighborhood. Subtracting the constant $\mathcal{J}^*_\lambda$ from both sides of the inequality does not change it. Taking the full expectation yields:
$$ \E[\mathcal{J}_\lambda(\Psi_{k+1})] - \mathcal{J}^*_\lambda \le \E[\mathcal{J}_\lambda(\Psi_k)] - \mathcal{J}^*_\lambda - \gamma_k \E[\norm{\nabla \mathcal{J}_\lambda(\Psi_k)}^2] + \frac{L M \gamma_k^2}{2}. $$
$$ a_{k+1} \le a_k - \gamma_k \E[\norm{\nabla \mathcal{J}_\lambda(\Psi_k)}^2] + \frac{L M \gamma_k^2}{2}. $$

\textbf{Step 2: Applying the Local P-L Condition.}
By Assumption \ref{ass:local_structure}.2, the local P-L condition holds within $\mathcal{N}(\Psi^*)$: $\norm{\nabla \mathcal{J}_\lambda(\Psi)}^2 \ge 2c (\mathcal{J}_\lambda(\Psi) - \mathcal{J}^*_\lambda)$. Taking the expectation:
$$ \E[\norm{\nabla \mathcal{J}_\lambda(\Psi_k)}^2] \ge 2c \, \E[\mathcal{J}_\lambda(\Psi_k) - \mathcal{J}^*_\lambda] = 2c \, a_k. $$
Substituting this into our recurrence for $a_k$:
\begin{align*}
a_{k+1} &\le a_k - \gamma_k (2c \, a_k) + \frac{L M \gamma_k^2}{2} \\
a_{k+1} &\le (1 - 2c\gamma_k) a_k + \frac{L M \gamma_k^2}{2}.
\end{align*}

\textbf{Step 3: Convergence via Robbins-Siegmund.}
This recurrence for the non-negative sequence $a_k$ is identical in form to the one derived in the global convergence proof (Appendix \ref{app:global_conv_proof}). With learning rates satisfying $\sum \gamma_k = \infty$ and $\sum \gamma_k^2 < \infty$, the conditions of the Robbins-Siegmund lemma are satisfied.
Therefore, we conclude that $\lim_{k\to\infty} a_k = 0$. This implies $\lim_{k \to \infty} \E[\mathcal{J}_\lambda(\Psi_k) - \mathcal{J}^*_\lambda] = 0$, or $\lim_{k \to \infty} \E[\mathcal{J}_\lambda(\Psi_k)] = \mathcal{J}_\lambda(\Psi^*)$.

The crucial caveat is that this proof relies on the assumption that the iterates remain within the neighborhood $\mathcal{N}(\Psi^*)$. This is a practical consideration: if the algorithm is initialized sufficiently close to a "good" local minimum (one satisfying the local P-L condition) and the learning rates are sufficiently small, the updates are unlikely to propel the parameters outside this region of well-behaved geometry.
\end{proof}

\section{Proof of Theorem \ref{thm:ergodic_stability} (Ergodic Stability)}
\label{app:ergodic_stability_proof}

This proof formally establishes the ergodicity of the learned system by applying the theory of stochastic stability, specifically the Foster-Lyapunov drift criterion.

\begin{proof}
\textbf{Step 1: Deriving the Drift of the Lyapunov Function via Itô's Formula.}
We choose the quadratic function $U(s) = \norm{s}^2$ as our candidate Lyapunov function. Let $S_t$ be the state process generated by the learned operator $L_{\Psi^*,S}$, with drift $\mu_{\Psi^*}(s) = \mu(s, \alpha_{\omega^*}(s))$ and diffusion matrix $\sigma_{\Psi^*}(s) = \sigma(s, \alpha_{\omega^*}(s))$. The SDE is $dS_t = \mu_{\Psi^*}(S_t) dt + \sigma_{\Psi^*}(S_t) d\mathcal{Z}_t$ (specializing to Brownian motion for clarity, as $C=I$).
We apply Itô's formula to the process $U(S_t)$:
$$ dU(S_t) = \nabla U(S_t)^\top dS_t + \frac{1}{2} \Tr\left( (\sigma_{\Psi^*}(S_t) \sigma_{\Psi^*}(S_t)^\top) \nabla^2 U(S_t) \right) dt. $$
The derivatives of $U(s)$ are $\nabla U(s) = 2s$ and $\nabla^2 U(s) = 2I_d$. Substituting these and the SDE for $dS_t$:
\begin{align*}
dU(S_t) &= (2S_t)^\top (\mu_{\Psi^*}(S_t) dt + \sigma_{\Psi^*}(S_t) d\mathcal{Z}_t) + \frac{1}{2} \Tr\left( (\sigma_{\Psi^*}(S_t) \sigma_{\Psi^*}(S_t)^\top) (2I_d) \right) dt \\
&= \left[ 2S_t^\top \mu_{\Psi^*}(S_t) + \Tr(\sigma_{\Psi^*}(S_t) \sigma_{\Psi^*}(S_t)^\top) \right] dt + 2S_t^\top \sigma_{\Psi^*}(S_t) d\mathcal{Z}_t.
\end{align*}
The term in the square brackets is precisely the action of the generator $L_{\Psi^*,S}$ on the function $U$, i.e., $(L_{\Psi^*,S} U)(S_t)$. So, we have $dU(S_t) = (L_{\Psi^*,S} U)(S_t) dt + (\text{martingale term})$.

\textbf{Step 2: Applying the Dissipativity Assumption.}
The theorem assumes a dissipativity condition on the underlying dynamics, which implies that for the learned control policy $\alpha_{\omega^*}(s)$, there exists a compact set $\mathcal{C} \subset \R^d$ and a constant $K>0$ such that for all $s \notin \mathcal{C}$:
$$ (L_{\Psi^*,S} U)(s) = 2s^\top \mu_{\Psi^*}(s) + \Tr(\sigma_{\Psi^*}(s)\sigma_{\Psi^*}(s)^\top) \le -K \norm{s}^2 = -K U(s). $$

\textbf{Step 3: Verifying the Foster-Lyapunov Condition.}
The inequality derived in Step 2 is a classic Foster-Lyapunov drift condition. A fundamental theorem of stochastic stability [cf. \cite{Khasminskii2011}, \cite{Meyn2009}] states that if there exists a non-negative test function $U(s)$ such that $U(s) \to \infty$ as $\norm{s} \to \infty$, and for some compact set $\mathcal{C}$, the generator's action satisfies $(LU)(s) \le -c_1$ for some $c_1>0$ for all $s \notin \mathcal{C}$, then the process is positive Harris recurrent. Our condition $(L_{\Psi^*,S} U)(s) \le -K U(s)$ is even stronger, as $U(s)$ grows with $s$.
Positive Harris recurrence implies the existence of a unique stationary probability distribution $\pi_{\Psi^*}$. If the process is also irreducible (which is guaranteed if the diffusion matrix $\sigma_{\Psi^*}\sigma_{\Psi^*}^\top$ is strictly positive definite everywhere, ensuring the process can move between any two open sets), then positive Harris recurrence is equivalent to ergodicity.

\textbf{Step 4: Role of the Optimization Objective.}
The theorem states that the optimization finds parameters $\Psi^*$ for which the time-averaged Lyapunov drift is negative:
$$ \E\left[\frac{1}{T} \int_0^T (L_{\Psi^*,S} U)(S_t) dt\right] < 0. $$
This serves as an empirical check of the stability condition. Since we have established from the dissipativity assumption that the process is ergodic, the ergodic theorem guarantees that this time average converges to the spatial average almost surely:
$$ \lim_{T\to\infty} \frac{1}{T} \int_0^T (L_{\Psi^*,S} U)(S_t) dt = \int_{\R^d} (L_{\Psi^*,S} U)(s) d\pi_{\Psi^*}(s). $$
The negative value of this integral confirms that, under the stationary measure, the system is on average dissipative, reinforcing the conclusion of stability. The Lyapunov regularizer in the loss function explicitly encourages the optimizer to find a parameter set $\Psi^*$ for which this condition holds, thereby promoting the learning of a stable system.
\end{proof}

\end{appendix}
\end{document}